\def\eqref#1{Equation~\ref{#1}}			
\def\figref#1{Figure~\ref{#1}}			
\def\tabref#1{Table~\ref{#1}}			
\def\algref#1{Algorithm~\ref{#1}}		
\def\dffref#1{Definition~\ref{#1}}		
\def\lemref#1{Lemma~\ref{#1}}		
\newtheorem{prp}{Proposition}
\newtheorem{dff}{Definition}
\newtheorem{lem}{Lemma}
\newtheorem{cor}{Corollary}
\newcommand{\nodes}{\mathbf{V}}
\newcommand{\obs}{\mathbf{O}}
\newcommand{\lat}{\mathbf{L}}
\newcommand{\sel}{\mathbf{S}}
\newcommand{\cdag}{\mathcal{D}}
\newcommand{\cdagset}[3]{\cdag(#1, #2, #3)}
\newcommand{\cdagseti}[4]{\cdag_{#4}(#1, #2, #3)}
\newcommand{\pag}{\mathcal{G}}
\DeclareMathOperator{\ICDSep}{\mathbf{ICD-Sep}}
\DeclareMathOperator{\DSepop}{\mathbf{D-Sep}}
\DeclareMathOperator{\PDSepop}{\mathbf{Possible-D-Sep}}
\newcommand{\DSEPset}[2]{\DSepop(#1, #2)}
\newcommand{\PDSEPset}[2]{\PDSepop(#1, #2)}
\newcommand{\PDSPath}[3]{\Pi_{#2}(#1, #3)}
\DeclareMathOperator{\Edges}{edges}
\newcommand{\assign}{\leftarrow}
\DeclareMathOperator{\sindep}{Ind}  
\newcommand*{\indep}{%
  \mathbin{%
    \mathpalette{\@indep}{}%
  }%
}
\newcommand*{\nindep}{%
  \mathbin{
    \mathpalette{\@indep}{\not}
  }%
}
\newcommand*{\@indep}[2]{%
  \sbox0{$#1\perp\m@th$}
  \sbox2{$#1=$}
  \sbox4{$#1\vcenter{}$}
  \rlap{\copy0}
  \dimen@=\dimexpr\ht2-\ht4-.2pt\relax
  \kern\dimen@
  {#2}%
  \kern\dimen@
  \copy0 
} 
\title{Iterative Causal Discovery in the Possible Presence of Latent Confounders and Selection Bias}
\author{
Raanan Y.~Rohekar, Shami Nisimov, Yaniv Gurwicz, Gal Novik \\
Intel Labs\\
\texttt{\{raanan.yehezkel, shami.nisimov, yaniv.gurwicz, gal.novik\}@intel.com} \\
}
\begin{document}

\maketitle

\begin{abstract}
  We present a sound and complete algorithm, called iterative causal discovery (ICD), for recovering causal graphs in the presence of latent confounders and selection bias. ICD relies on the causal Markov and faithfulness assumptions and recovers the equivalence class of the underlying causal graph. It starts with a complete graph, and consists of a single iterative stage that gradually refines this graph by identifying conditional independence (CI) between connected nodes. Independence and causal relations entailed after any iteration are correct, rendering ICD anytime. Essentially, we tie the size of the CI conditioning set to its distance on the graph from the tested nodes, and increase this value in the successive iteration. 
  Thus, each iteration refines a graph that was recovered by previous iterations having smaller conditioning sets---a higher statistical power---which contributes to stability. We demonstrate empirically that ICD requires significantly fewer CI tests and learns more accurate causal graphs compared to FCI, FCI+, and RFCI algorithms
  (code is available at \url{https://github.com/IntelLabs/causality-lab}).
\end{abstract}

\section{Introduction}
Causality plays an important role in many sciences, such as social sciences, epidemiology, and finance  \citep{pearl2010introduction,spirtes2010introduction}.
Understanding the underlying mechanisms is crucial for tasks such as explaining a phenomenon, predicting, and decision making.
\citet{pearl2009causality} provided a machinery for automating the process of answering interventional and (retrospective) counterfactual queries even when only observed data is available, and determining if a query cannot be answered given the available data type (identifiability). This requires knowledge about the true underlying causal structure;
however, in many real-world situations, this structure is unknown. There is a large body of literature on recovering causal relations from observed data---causal discovery \citep{spirtes2000, peters2017elements, cooper1992bayesian, chickering2002optimal, shimizu2006linear, hoyer2009nonlinear, rohekar2018bayesian, yehezkel2009rai, nisimov2021improving}, differing in the assumptions upon which they rely.
In this work we assume a directed acyclic graph (DAG) for the underlying causal structure and focus on learning it from observational data. Furthermore, we assume the causal Markov and faithfulness assumptions, and consider recovering the structure by performing a series of conditional independence (CI) tests \citep{spirtes2000}. In this setting the true DAG is statistically indistinguishable from many other DAGs. Moreover, when considering the possible presence of latent confounders and selection bias (no causal sufficiency), the true DAG cannot be recovered. Instead, \citet{richardson2002ancestral} proposed the maximal ancestral graph (MAG), which represents independence relations among observed variables, and the partial ancestral graph (PAG), which is a Markov equivalence class of MAGs---a set of MAGs that cannot be ruled out given the observed  independence relations. 

Recently, causal identification was demonstrated for PAG models \citep{jaber2018causal, jaber2019causal}, which is a more practical use of these models. That is, by using only observed data and no prior knowledge on the underlying causal relations, some identification and causal queries can be answered.

In this paper, we address the problem of learning a PAG such that interrupting the learning process results in a correct PAG. That is, all the entailed independence and causal relations are correct, although it can be less informative. This anytime property is important in many real-world settings where it is desired to recover as many causal relations as possible under limited compute power.

\section{Related Work}

Causal discovery in the potential presence of latent confounders and selection bias requires placing additional assumptions. In this paper we assume the causal Markov assumption \citep{pearl2009causality}, faithfulness \citep{spirtes2000}, and a DAG structure for the underlying causal relations. In this setting, several causal discovery algorithms have been proposed, FCI \citep{spirtes2000}, RFCI \citep{colombo2012learning}, FCI+ \citep{claassen2013learning}, and GFCI \citep{ogarrio2016hybrid}. Limitations of FCI have been reported previously where it tends to erroneously exclude many edges that are in the true underlying graph, and it requires many CI tests with large conditioning sets. The GFCI algorithm employs a greedy score-based approach to improve the accuracy for small data sizes (small sample). However, it requires additional assumptions for justifying the score function that it uses. The RFCI algorithm alleviates computational complexity by avoiding the last stage of FCI. This stage requires many CI tests having large conditioning sets. Although it is sound (outputs correct causal information), it is not complete (some MAGs in the equivalence class can be ruled out given the data).

Similarily to the FCI and FCI+ algorithms, we consider a procedure that is sound and complete in the large sample limit (or when a perfect conditional independence oracle is used).  
However, these algorithms, for a MAG, treat nodes that are m-separated by adjacent nodes differently from nodes that are m-separated by a minimal separating set that includes nodes outside the neighborhood. In contrast, we treat all possible separating sets in a similar manner. We employ an iterative procedure that gradually increases the search radius on the graph for identifying minimal separating sets. This allows our method to be interrupted at any iteration, returning a correct graph, similarly to the anytime FCI algorithm \citep{spirtes2001anytime}.

\begin{figure*}[h]
    \centering
    (a)\includegraphics[height=0.08\textheight]{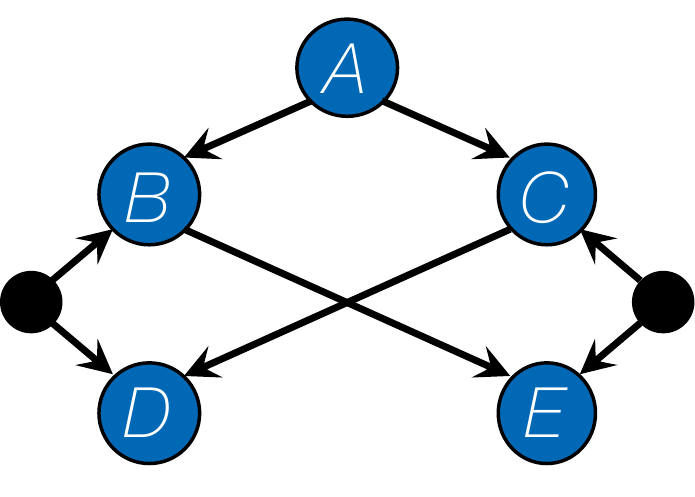}~~\quad
    (b)\includegraphics[height=0.08\textheight]{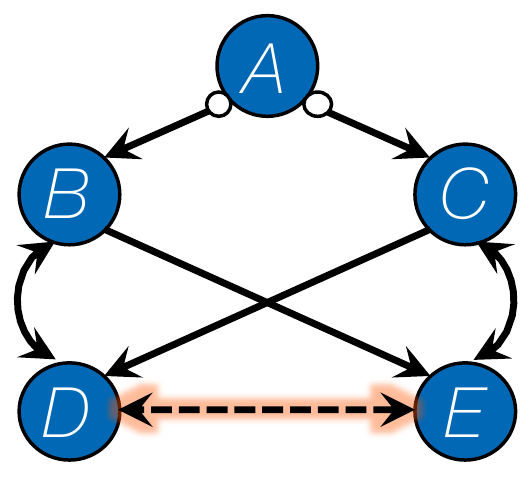}~~\quad
    (c)\includegraphics[height=0.08\textheight]{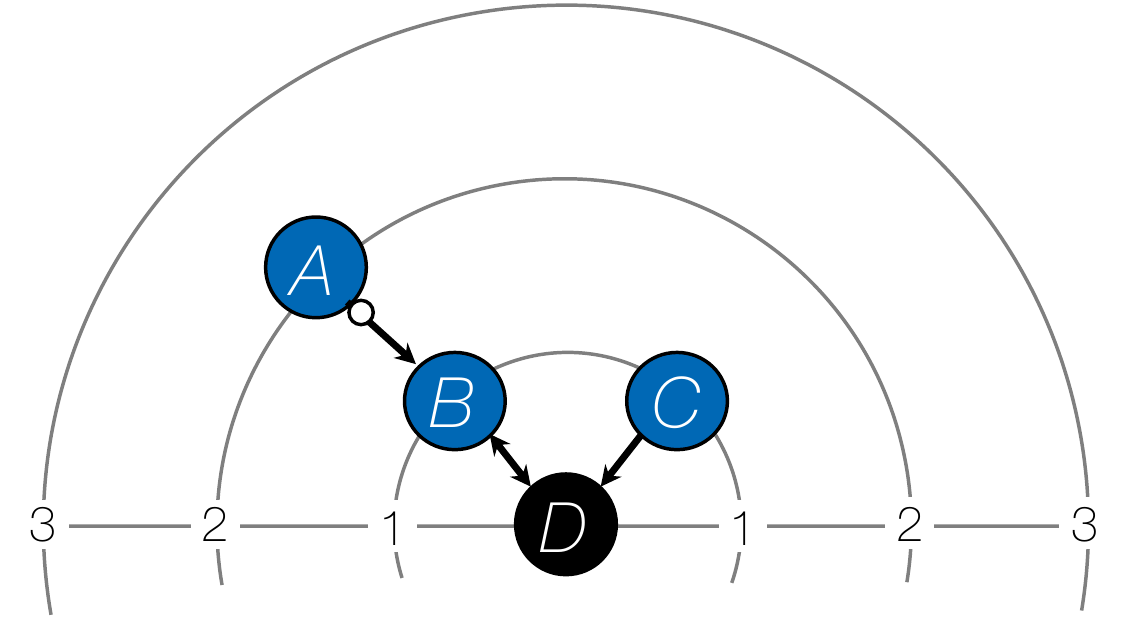}~~\quad
    (d)\includegraphics[height=0.08\textheight]{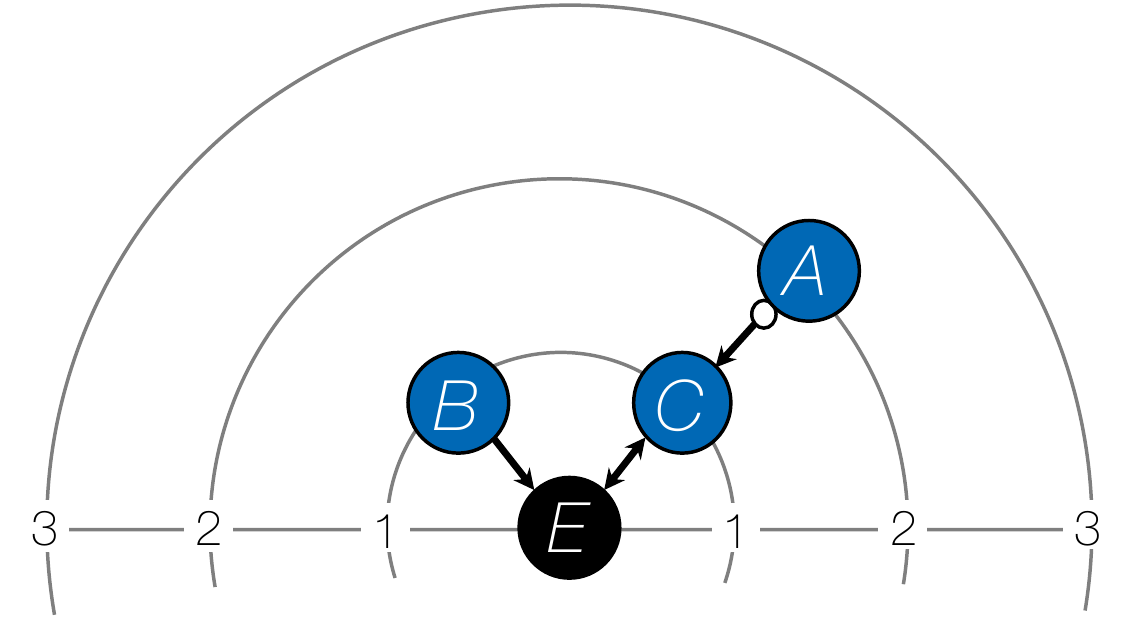}
    \caption{An example for the construction of conditioning sets by the ICD algorithm. (a) The true underlying DAG, where black circles represent latent variables. (b) A PAG resulting after three ICD iterations, $r\in\{0,1,2\}$---a $2$-$\obs$-\textbf{equivalence} class. (c) and (d) PDS-trees with $r=3$ search radii for nodes $D$ and $E$, respectively, where radial coordinate indicates distance from the root.}
    \label{fig:example}
\end{figure*}

\section{Anytime Iterative Discovery of Causal Relations}

First, we provide definitions and assumptions. Then, we describe the iterative causal discovery (ICD) algorithm and prove its correctness. We conclude by discussing efficiency and stability.

\subsection{Preliminaries\label{sec:Preliminaries}}

A causal DAG, $\cdag$, over nodes $\nodes=\obs\cup\sel\cup\lat$ is denoted by $\cdagset{\obs}{\sel}{\lat}$, where $\obs$, $\sel$, and $\lat$ represent disjoint sets of observed, selection, and latent variables, respectively. We use an ancestral graph \citep{richardson2002ancestral} to model the conditional independence relations among the observed variables $\obs$ in the causal DAG $\cdag$. This class of graphical models is useful since for every causal DAG there exits a unique MAG. In this setting, our method is aimed at recovering the MAG of the true underlying DAG, from the result of CI tests. A CI test is commonly a statistical hypothesis test used to determine from observed data whether two variables are statistically dependent conditioned on a set of some other variables (a conditioning set).  If independence is found, the conditioning set is called a separating set for the tested nodes.
However, in this setting the MAG cannot be fully recovered, and only a Markov equivalence class, represented by a PAG, can be recovered.
In a PAG, a variant edge-mark is denoted by an empty circle `---o $X$'. Namely, in the equivalence class there exists at least one MAG that has a tail edge-mark '{---}--$X$' and at least one MAG that has an arrowhead '{---}>$X$' at the same location.

\begin{dff}[$\obs$-equivalence \citep{spirtes2000}]\label{dff:o-equiv}
    Two DAGs, $\cdagseti{\obs}{\sel_i}{\lat_i}{i}$ and  $\cdagseti{\obs}{\sel_j}{\lat_j}{j}$ are said to be $\obs$\textbf{-equivalent} if and only if
    $$ \mathbf{X} \indep \mathbf{Y}|(\mathbf{Z}\cup\sel_i) \mathrm{~in~} \cdag_i \iff \mathbf{X} \indep \mathbf{Y}|(\mathbf{Z}\cup\sel_j) \mathrm{~in~} \cdag_j, $$
    for every possible disjoint subsets $\mathbf{X}$, $\mathbf{Y}$, and $\mathbf{Z}$ of $\obs$. 
\end{dff}

That is, the observed d-separation relations in two $\obs$-equivalent DAGs, $\cdag_i$ and $\cdag_j$, are identical. \citet{spirtes2001anytime} defines equivalence considering an $n$-\textbf{oracle} for testing conditional independence. It returns ``dependence'' if the conditioning set size is larger than $n$, otherwise d-separation is tested and returned. Thus, $n$-$\obs$-\textbf{equivalence} class is defined by using the $n$-\textbf{oracle} instead of d-separation in \dffref{dff:o-equiv}. The PAG that represents this equivalence class is called $n$-\textbf{representing} \citep{spirtes2001anytime}.

In a MAG, node $X$ is in $\DSEPset{A}{B}$ (note the capitalization of `D') if and only if $X \neq A$ and there is an undirected path between $A$ and $X$ such that every node on the path, except the endpoints, is a collider and is an ancestor of $A$ or $B$ \citep{spirtes2000}. The FCI algorithm utilizes a super-set of $\DSepop$, called $\PDSEPset{A}{B}$ \citep{spirtes2000} for testing CI. It is defined for a skeleton learned by the PC algorithm\footnote{PC is a causal discovery algorithm assuming causal sufficiency (absence of hidden confounders and selection bias). Conditioning sets consist of nodes only from the neighborhood of the tested nodes.} \citep{spirtes2000} and its identified v-structures, referred in this paper as the first stage of FCI. This super-set, used by FCI in its second stage, includes all the nodes connected by a path to $A$, where every node on this path, except the end-points, is a collider or part of a triangle, hiding its orientation. 
We consider a smaller super-set of $\DSepop$ for PAGs and take special interest in the path that connects any $Z\in\PDSEPset{A}{B}$ to $A$.

\begin{dff}[PDS-path]\label{dff:pds-path}
    A possible-D-Sep-path (PDS-path) from $A$ to $Z$, with respect to $B$, in PAG $\pag$, denoted $\PDSPath{A}{B}{Z}$, is a path $\langle A,\ldots,Z\rangle$ such that $B$ is not on the path and for every sub-path $\langle U,V,W \rangle$ of $\PDSPath{A}{B}{Z}$, $V$ is a collider or $\{U, V, W\}$ forms a triangle.
\end{dff}

Note that an edge in a PAG is a PDS-path. Following the definition of PDS-path, we define a tree rooted at a given node consisting of all PDS-paths from that node.

\begin{dff}[PDS-tree]\label{dff:pds-tree}
    A possible-D-Sep-tree (PDS-tree) for $A$, with respect to $B$ given PAG $\pag$, is a tree rooted at $A$, such that there is a path from $A$ to $Z$, $\langle A, V_1,\ldots, V_k, Z\rangle$, if and only if there is a PDS-path $\langle A,V_1,\ldots, V_k,Z \rangle$, with respect to $B$, in $\pag$.
\end{dff}

Lastly, considering the last condition of $\DSepop$ definition, we define a possible ancestor of a node in a given PAG.

\begin{dff}[Possible Ancestor]
In a PAG, $V_i$ is a possible ancestor of $V_{i+k}$ if there is a path $\langle V_i, V_{i+1}, \ldots, V_{i+k}\rangle$ such that, $\forall j\in\{i,\ldots,i+k-1\}$, on the edge $(V_j,V_{j+1})$ there is no arrowhead at $V_j$ and no tail edge-mark at $V_{j+1}$.
\end{dff}

\subsection{Iterative Causal Discovery Algorithm\label{sec:alg}}

We present a single stage that is called iteratively, for recovering the underlying equivalence class, represented by a PAG, for the true underlying DAG $\cdagset{\obs}{\sel}{\lat}$. Each iteration is parameterized by $r$, where $r\in \{0, \ldots, |\obs|-2\}$. Given a PAG returned by the previous $r-1$ iteration, each pair of connected nodes $A$ and $B$ is tested for independence conditioned on a set $\mathbf{Z}\subset\obs$. If independence is found, the connecting edge is removed. The conditioning set $\mathbf{Z}$ should comply with the following conditions for $(A,B)$ or $(B,A)$, which we call \emph{ICD-Sep conditions}. ICD-Sep conditions for the ordered pair $(A,B)$ are,
\begin{enumerate}
    \item $|\mathbf{Z}|=r$,\label{ICD-size}
    \item $\forall Z\in\mathbf{Z}$, there exists a PDS-path $\PDSPath{A}{B}{Z}$ such that,\label{ICD-node-constraints}
    \begin{enumerate}
        \item $|\PDSPath{A}{B}{Z}| \leq r$ and\label{ICD-pathlen}
        \item every node on $\PDSPath{A}{B}{Z}$ is in $\mathbf{Z}$, \label{ICD-closed-set}
    \end{enumerate}
    \item $\forall Z\in\mathbf{Z}$, node $Z$ is a possible ancestor of $A$ or $B$ (not a necessary condition).\label{ICD-possible-ancestor}
\end{enumerate}

Condition \ref{ICD-size} restricts the conditioning set size, and condition \ref{ICD-node-constraints} restricts its distance from the tested nodes. Tying these two conditions together is the key idea of the presented ICD algorithm. Condition \ref{ICD-possible-ancestor} follows the last condition in the definition of $\DSepop$. It is not a necessary condition for the correctness of the proposed method but can reduce the number of considered conditioning sets when testing independence. Sets complying with ICD-Sep conditions are denoted $\ICDSep$.

In comparison to $\PDSepop$ that is utilized by the FCI algorithm, $\ICDSep$ is a smaller super-set of $\{\mathbf{S}~|~\mathbf{S}\subset\DSepop, |\mathbf{S}|=r\}$.
Nevertheless, $\ICDSep$ and $\PDSepop$ may be similar when considering densely connected graphs or when the recoverable equivalence class is large (many variant edge-marks), such as a complete graph. However, these cases are usually rare in real-world scenarios.

Next, consider a PDS-tree for $A$ with respect to $B$ in $\pag$. Sets complying with ICD-sep conditions can be created by traversing this tree. Condition \ref{ICD-size} restricts the number of nodes to be included in $\mathbf{Z}$ to the specific value of $r$. Condition \ref{ICD-node-constraints} places constraints on the nodes in $\mathbf{Z}$ as follows. Condition \ref{ICD-pathlen} limits the radius around $A$ in which the nodes of $\mathbf{Z}$ reside, effectively limiting the depth of the PDS-tree. Condition \ref{ICD-closed-set} ensures that if a node is in $\mathbf{Z}$, then every node on the PDS-path connecting it to $A$ is also in $\mathbf{Z}$. An example is given in \figref{fig:example}, where (a) is the true underlying DAG and (b) is the corresponding PAG that represents a $2$-$\obs$-\textbf{equivalence class}. A redundant edge between $D$ and $E$ exists. PDS-trees for $D$ and $E$ are depicted in \figref{fig:example} (c) and (d), respectively. An example for complying with condition \ref{ICD-closed-set} when constructing an $\ICDSep$ for $(D,E)$ is as follows. From \figref{fig:example} (c) if $A$ is in the set, then $B$ must also be in the set, as it lies on the only path from $D$ to $A$.

It is important to note that by parameter $r$, we bind the conditioning set size to its distance from the tested nodes (ICD-Sep conditions \ref{ICD-size} and \ref{ICD-pathlen}). That is, the conditioning set size is bounded by the shortest PDS-path length connecting its nodes to the tested nodes.

In an ICD iteration $r$, first, all connected edges are tested for independence conditioned on $\ICDSep$ sets. Then the ICD iteration concludes by orienting the resulting graph. Initially, v-structures are oriented and then FCI-orientation rules \citep{spirtes2000, spirtes2001anytime} are repeatedly applied until no more edges can be oriented. For completeness, in the last ICD iteration, a complete set of orientation rules is applied \citep{zhang2008completeness}.

The ICD algorithm is described in \algref{alg:Alg}. The main loop, lines 2--4, iterates over conditioning set sizes concurrently with the search radius on the graph. The iterative stage is described in function \texttt{Iteration}, lines 6--18. This function can be viewed as an operator that maps from an $(r-1)$-$\obs$-\text{equivalence} class to an $r$-$\obs$-\text{equivalence}.
Thus, ICD is anytime in the sense that the main loop, lines 2--4, can be terminated for any value of $r$, resulting in a PAG that entails correct independence and causal relations. That is, terminating the loop after iteration $r=n$, results in a PAG that represents an $n$-$\obs$-\textbf{equivalence} class. Nevertheless, it still may not entail all relations that are entailed from the $\obs$-\textbf{equivalence} class of the underlying causal graph.

In the first ICD iteration, the initial PAG is a complete graph and $r=0$. Thus every pair of nodes is tested for marginal independence (conditioning sets are empty). In the second iteration $r=1$, where only nodes that are adjacent to the tested nodes are included in the conditioning set (PDS-path length limit is one edge). Only in succeeding iterations, nodes that are outside the neighborhood of the tested nodes may be included in the conditioning set. Conditioning sets, composed with from adjacent nodes or from outside the neighborhood, are returned by the function \texttt{PDSepRange}. 

The result of 
\texttt{PDSepRange}$(X, Y, r, \pag)$,
in \algref{alg:Alg}-line 9, is an ordered set of possible separating sets $\{\mathbf{Z}_i\}_{i=1}^\ell$, where each $\mathbf{Z}_i$ complies with the ICD-Sep conditions. The specific order in which these sets are used (\algref{alg:Alg}-line 12) to test conditional independence in \algref{alg:Alg}-line 13 may affect the total number of CI test in practice. 
One possible heuristic for ordering this set is such that $\mathbf{Z}_i$ sets are sorted based on to the average of the shortest PDS-path lengths connecting each node in $\mathbf{Z}_i$ to the tested nodes. First, for every set $\mathbf{Z} \in \{\mathbf{Z}_i\}_{i=1}^\ell$ created by \texttt{PDSepRange}, the following value is calculated,

\begin{equation}
    \hat{d}_X (\mathbf{Z}) = \frac{1}{|\mathbf{Z}|} \sum_{W\in\mathbf{Z}} \min(|\PDSPath{X}{Y}{W}|),
\end{equation}
where $\PDSPath{X}{Y}{W}$ is the PDS-path from $X$ to $W$, and $|\cdot|$ is path length. 
Then, the possible separating sets, $\mathbf{Z}_i$ are ordered according to this value. 
Note that the correctness of ICD is invariant to this order.

\begin{algorithm}
\SetKwInput{KwInput}{Input}                
\SetKwInput{KwOutput}{Output}              
\SetKw{Break}{break}
\DontPrintSemicolon
  
  \BlankLine
  
  \KwInput{
    \\\quad $n$: desired $n$-representing PAG (default: $|\obs|-2$)
    \\\quad $\sindep$: a conditional independence oracle
    }
    
    \BlankLine
    
  \KwOutput{\\\quad $\pag$: a PAG for $n$-$\obs$-equivalence class (a completed PAG is returned for the default $n=|\obs|-2$)}

  \SetKwFunction{FMain}{Main}
  \SetKwFunction{FIter}{Iteration}
  \SetKwFunction{FPDSepRange}{PDSepRange}

\BlankLine\BlankLine\BlankLine\BlankLine

{initialize: $r \assign 0$, $\pag \assign$ a complete graph with `o' edge-marks, and $done \assign \mathrm{False}$}\;
\BlankLine
\While{$(r \leq n)$ \& $(done=\mathrm{False})$}{
    $(\pag, done) \leftarrow$ \FIter($\pag$, $r$) \Comment*{refine $\pag$ using conditioning sets of size $r$}
    $r \assign r+1$
    }
\KwRet $\pag$\;

\BlankLine\BlankLine\BlankLine\BlankLine

  \SetKwProg{Fn}{Function}{:}{\KwRet}
  \Fn{\FIter{$\pag$, $r$}}{
        $done \leftarrow \text{True}$\;
        \For{edge $(X, Y)$ ~in~ $\Edges(\pag)$}{
            $\{\mathbf{Z}_i\}_{i=1}^\ell \leftarrow$ \FPDSepRange($X$, $Y$, r, $\pag$)\Comment*{$\mathbf{Z}_i$ complies with  ICD-Sep conditions}

            \If{$\ell > 0$}{
                  $done \leftarrow \text{False}$\; 

                \For{$i \assign 1$ \KwTo $\ell$}{
                
                    \If{$\sindep(X, Y| \mathbf{Z}_i)$}{
                        remove edge $(X, Y)$ from $\pag$\;
                        record $\mathbf{Z}_i$ as a separating set for $(X, Y)$\;
                        \Break\;
                    }
                }
            }
        }
        orient edges in $\pag$\;

        \KwRet $(\pag, done)$\;
  }
  \caption{Iterative causal discovery (ICD algorithm)}
  \label{alg:Alg}
\end{algorithm}

\subsection{An Example for the Difference between ICD and FCI}

In this section we demonstrate the difference between the ICD and FCI recovering a simple graph that was used by \citet{spirtes2000} to demonstrate FCI. In \figref{fig:ICDvsFCI_Example}, (a) is the true MAG, and (b) is its corresponding PAG. Both FCI and ICD start with an unoriented complete graph (absence of independence and causal information). 
In \figref{fig:ICDvsFCI_Example} (c) the result of the first stage of FCI (PC skeleton and v-structures) is shown, and in \figref{fig:ICDvsFCI_Example} (d) the result of ICD after iteration $r=1$ (CI tests with up to one node in the conditioning set).

In both cases, the independence between $A$ and $E$ is not yet recovered, and both ICD and FCI require a similar number of CI tests with conditioning set of sizes 0 and 1. However, for concluding its first stage, FCI required additional 11 CI tests having conditioning set sizes 2 ($A$, $B$, $D$, and $E$, each has 3 neighbors: one indicates a tested edge while the other two serve as the conditioning set).

ICD continues after iteration $r=1$ with increasing values of $r$ and recovers the independence between $A$ and $E$ after only 3 and 1 CI tests with conditioning set size of 2 and 3, respectively (only 4 conditioning sets comply with $\ICDSep$ conditions). 
The CI tests, with conditioning set sizes of 2 are $\sindep(A, E | B,D)$, $\sindep(A, E | B,F)$, $\sindep(A, E | D,H)$ (no independence is found), and the single CI test having a conditioning set size of 3 is $\sindep(A,E|B, D, F)$ (independence is found). At this point, ICD terminates.
The second FCI stage (an iterative stage executed after concluding the first stage), requires an additional total of 76 CI tests with conditioning set sizes of up to 4 (76 conditioning sets comply with the definition of $\PDSepop$).
Overall, FCI requires 83 ($11+76-4$) additional CI tests compared to ICD. Note that for this specific example, ICD requires fewer CI tests than FCI's PC-stage alone.

\begin{figure}
  \centering
  (a)~\includegraphics[width=0.2\linewidth]{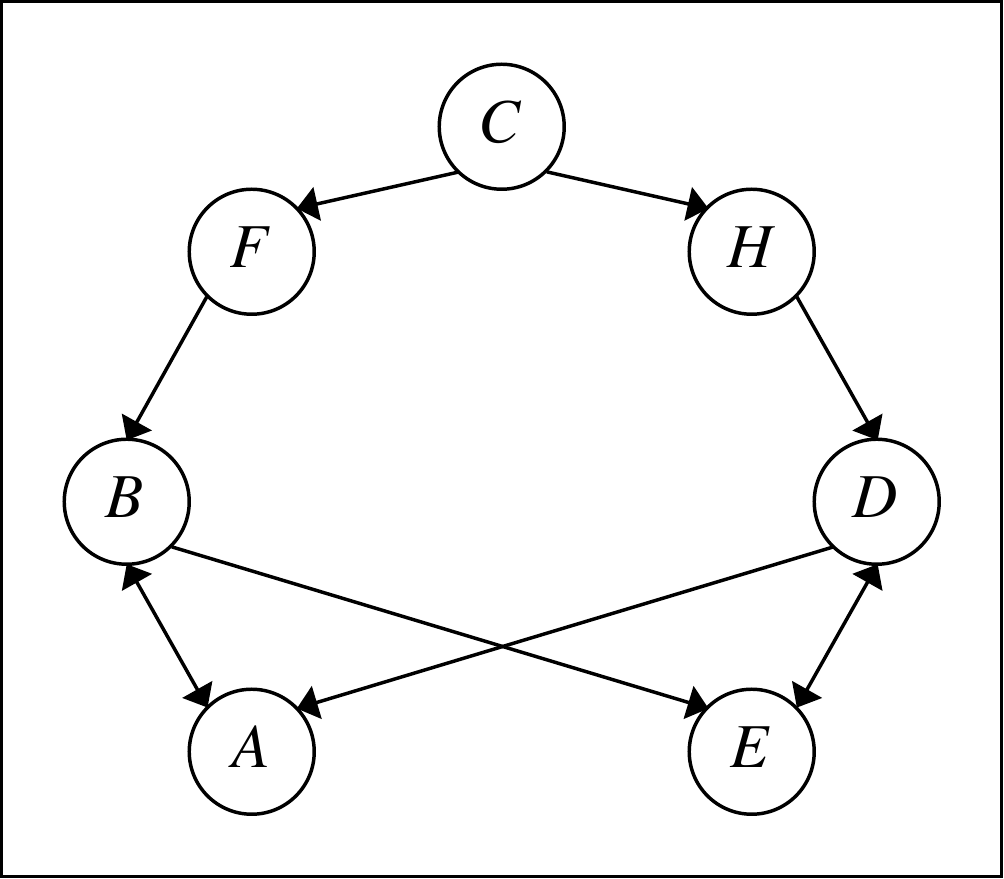}~~~
  (b)~\includegraphics[width=0.2\linewidth]{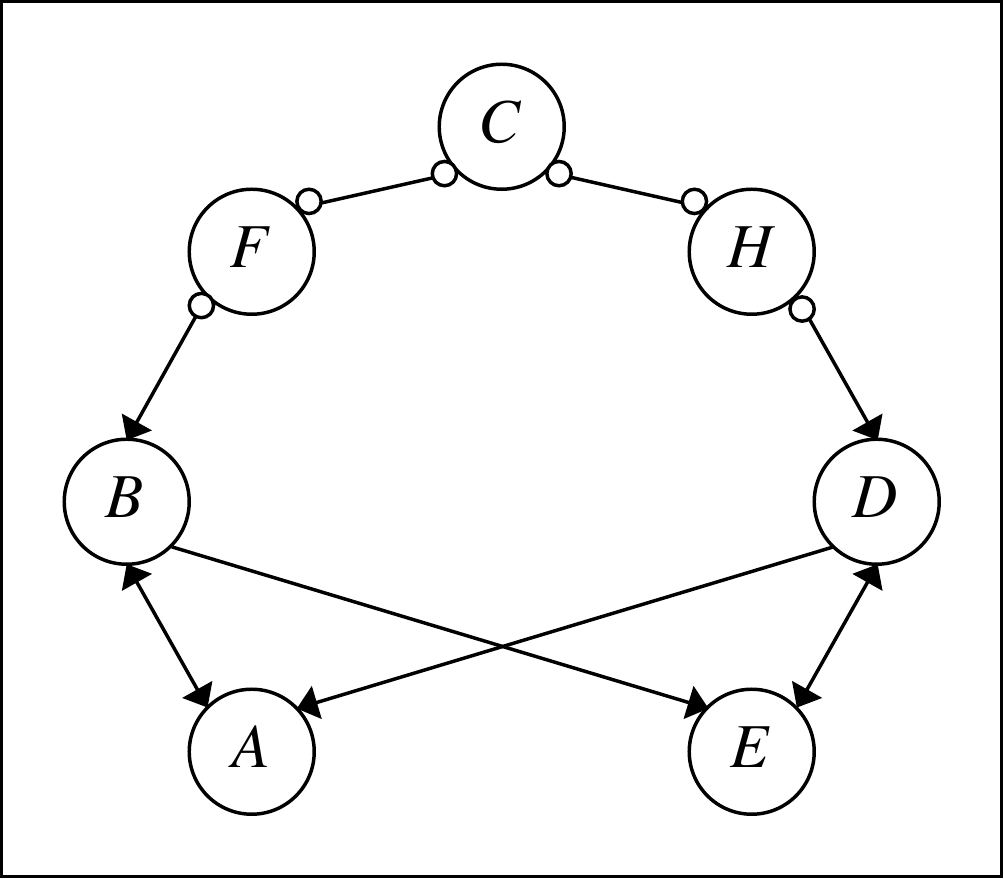}~~~
  (c)~\includegraphics[width=0.2\linewidth]{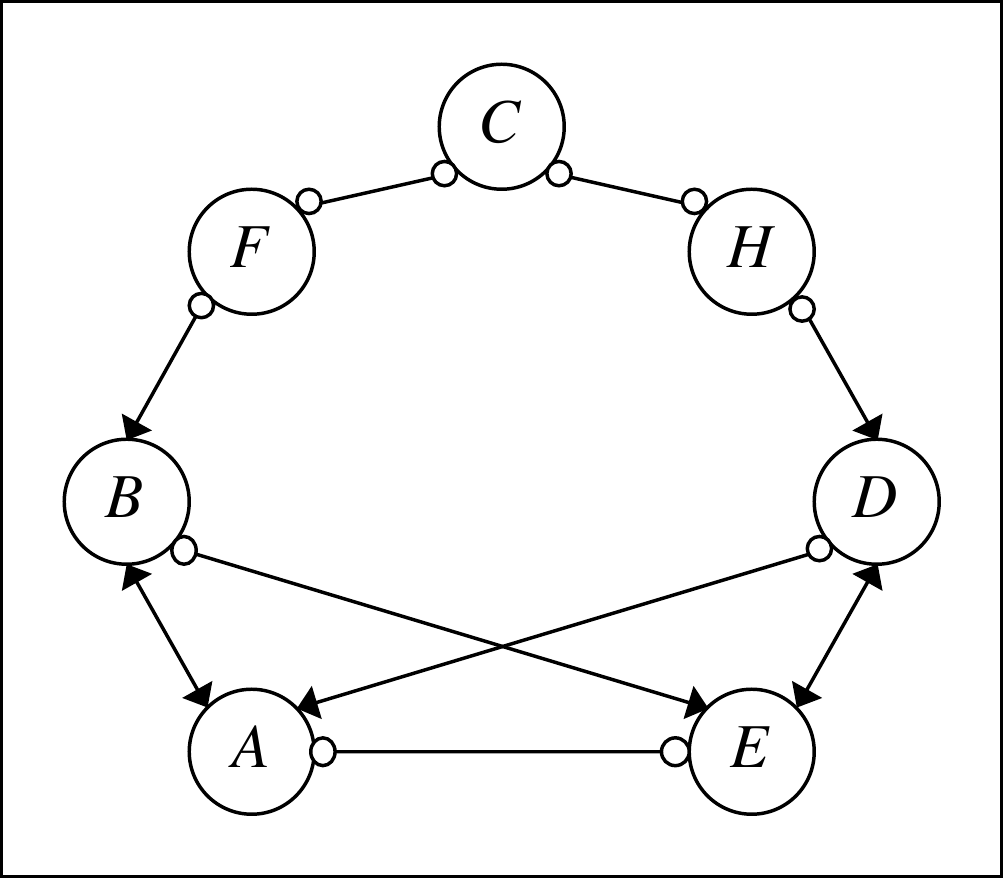}~~~
  (d)~\includegraphics[width=0.2\linewidth]{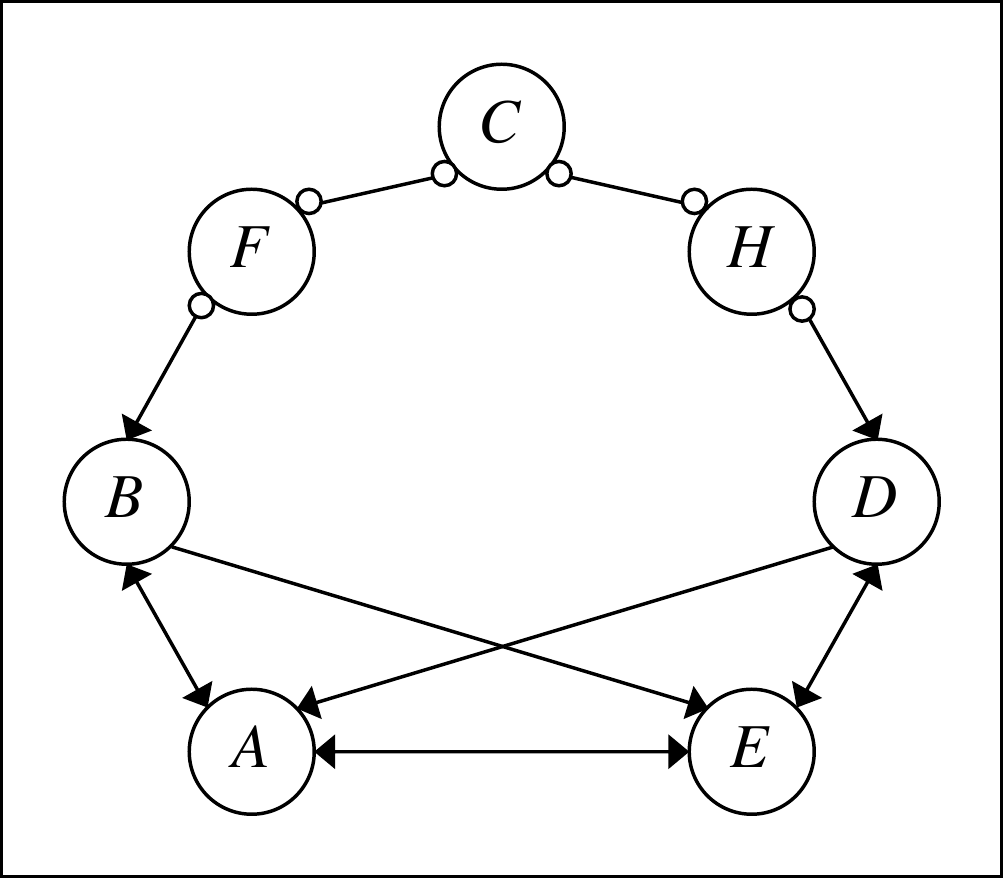}\\

  \caption{An example for comparing ICD with FCI using a simple 7-node graph that was used by \citet{spirtes2000} to describe FCI. (a) A MAG corresponding to the true underlying DAG. (b) A PAG corresponding to the MAG. (c) Skeleton and v-structure orientations that is used by FCI (and FCI+). (d) The PAG resulting after ICD iteration $r=1$. In both cases FCI (c) and ICD (d) it is desired to identify the independence between nodes $A$ and $E$. ICD requires significantly fewer CI tests compared to FCI (and PC in this specific case).
  \label{fig:ICDvsFCI_Example}}
\end{figure}

\subsection{Correctness}

We provide a sketch for the proof of correctness and completness of the ICD algorithm. The complete proof is in the supplementary material.

\begin{lem}\label{lem:pds-path-single}
    Let $\pag$ be a PAG $n$-representing DAG $\cdagset{\obs}{\sel}{\lat}$. Denote $A,B$ a pair of nodes from $\obs$ that are connected in $\pag$ and disconnected in $\cdag$, and such that $A$ is not an ancestor of $B$ in $\cdag$.
    
    If $A \indep B~|~[\mathbf{Z}']\cup\sel$, where $\mathbf{Z}'\subset\obs$ is a minimal separating set having size $n+1$, then there exists a subset $\mathbf{Z}\subset\obs$ having the same size of $n+1$ such that that $A \indep B~|~\mathbf{Z}\cup\sel$, and for every node $Z \in \mathbf{Z}$ there exists a PDS-path $\PDSPath{A}{B}{Z}$ in $\pag$, such that every node $V$ on the PDS-path is also in $\mathbf{Z}$.
\end{lem}

In essence, from the definition of $\DSepop$  \citetext{\citealp[page 134 and Theorem 6.2]{spirtes2000}}, a node Z is in $\DSepop(A,B)$ if and only if in the MAG there is a path between $A$ and $V$ such that every node, except for the end points, is: 1. a collider and 2. an ancestor of $A$ or $B$ (an inducing path for $\rangle\lat,\sel\langle$). For every such path in a MAG, there exists a PDS-path in the corresponding PAG. In an $n$-representing PAG, we can rule out paths from being such a path in the MAG. Thus, for every such path in the MAG, there is a PDS-Path in an $n$-representing PAG, which ensures identifying at least one minimal separating set between ever pair of nodes that are m-separated in the MAG. In addition, every sub-path starting at $A$, of the PDS-path between $A$ and $V$, is also a PDS-path. This provides a link between the distance of the separating set nodes and the number of nodes in the separating set.

\begin{cor}\label{cor:pds-path}
Let $\pag$ be a PAG $n$-representing DAG $\cdagset{\obs}{\sel}{\lat}$. Denote $A,B$ a pair of nodes from $\obs$ that are connected in $\pag$ and disconnected in $\cdag$.

If $A \indep B~|~[\mathbf{Z}']\cup\sel$, where $\mathbf{Z}'\subset\obs$ is a minimal separating set having size $n+1$, then there exists a subset $\mathbf{Z}\subset\obs$ having the same size of $n+1$ such that that $A \indep B~|~\mathbf{Z}\cup\sel$, and for every node $Z\in\mathbf{Z}$ there exists a PDS-path $\PDSPath{A}{B}{Z}$ or $\PDSPath{B}{A}{Z}$, where every node $V$ on the PDS-path is also in $\mathbf{Z}$.
\end{cor}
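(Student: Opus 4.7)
The plan is to obtain the corollary as an immediate consequence of Lemma \ref{lem:pds-path-single} by a symmetry argument. The lemma already delivers the desired $\mathbf{Z}$ together with PDS-paths emanating from $A$, provided $A$ is not an ancestor of $B$ in $\cdag$. The corollary drops this asymmetric hypothesis but in return permits PDS-paths from either endpoint, so the task reduces to dispatching the remaining case to the lemma applied in the opposite direction.

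First I would note that, since $\cdag$ is a DAG, the ancestral relation on its vertices is antisymmetric, so at least one of the following holds: $A$ is not an ancestor of $B$ in $\cdag$, or $B$ is not an ancestor of $A$ in $\cdag$. All the other premises of Lemma \ref{lem:pds-path-single}, namely that $\pag$ is $n$-representing $\cdag$, that $A$ and $B$ are connected in $\pag$ and disconnected in $\cdag$, and that $A \indep B \mid \mathbf{Z}' \cup \sel$ for a minimal $\mathbf{Z}'$ of size $n+1$, are manifestly symmetric under exchange of $A$ and $B$. In the first subcase I invoke Lemma \ref{lem:pds-path-single} on the ordered pair $(A,B)$ to obtain a set $\mathbf{Z} \subset \obs$ of size $n+1$ with $A \indep B \mid \mathbf{Z} \cup \sel$, such that every $Z \in \mathbf{Z}$ is joined to $A$ by a PDS-path $\PDSPath{A}{B}{Z}$ whose intermediate nodes all lie in $\mathbf{Z}$; this is the first alternative in the conclusion of the corollary. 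In the second subcase I apply the same lemma with the roles of $A$ and $B$ interchanged, yielding PDS-paths $\PDSPath{B}{A}{Z}$, which matches the second alternative.

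I do not anticipate a substantive obstacle: the only content beyond the lemma is the case split on which direction of the ancestor relation fails. The mildly delicate point is verifying that swapping the two endpoints in the lemma's premises does produce a valid instance of those premises, which is immediate from the symmetric formulation of connectivity, disconnection, and the independence statement. Hence the corollary follows at once from Lemma \ref{lem:pds-path-single} by a single case distinction.
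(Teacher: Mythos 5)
Your proposal is correct and follows essentially the same route as the paper: the paper's proof likewise reduces the corollary to Lemma~\ref{lem:pds-path-single} by noting that, by acyclicity of $\cdag$, either $A$ is not an ancestor of $B$ (so the minimal separating set lies in $\DSepop(A,B)$ and the lemma applies to the pair $(A,B)$) or $B$ is not an ancestor of $A$ (apply the lemma to $(B,A)$). Your explicit check that the remaining premises are symmetric under exchanging $A$ and $B$ is the same observation the paper leaves implicit.
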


The proof follows from \lemref{lem:pds-path-single}.

\begin{lem}\label{lem:ancestral-relations}
    Let $\pag$ be a PAG $n$-representing a causal DAG $\cdag$. Let $\mathcal{S}$ be a skeleton (unoriented graph) that results after removing edges from the skeleton of $\pag$ between every pair of nodes that are m-separated conditioned on a minimal separating set of size $n+1$.
    
    If $\mathcal{S}$ is oriented using anytime-FCI orientation rules, then the resulting graph is a PAG that $(n+1)$-represents the causal DAG $\cdag$.
    
\end{lem}

The proof relies on the correctness of the anytime-FCI algorithm \citep{spirtes2001anytime}. This ensures the correctness of the orientation in each ICD-iteration.

\begin{prp}[Correctness and completeness of the ICD algorithm]
Let $\pag$ be a PAG representing a causal DAG $\cdagset{\obs}{\lat}{\sel}$ and let $\sindep$ be a conditional independence oracle that returns d-separation relation for $\obs$ in $\cdag$. 
If \algref{alg:Alg} is called with $\mathrm{Ind}$, then the returned PAG, after uninterrupted termination is $\pag$.
\end{prp}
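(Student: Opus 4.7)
The plan is to prove the proposition by induction on the iteration counter $r$, maintaining the invariant that at the end of iteration $r$ the current graph $\pag$ is an $r$-representing PAG of the true causal DAG $\cdag$. The base case $r=0$ is immediate: the algorithm starts from the complete graph with circle endmarks and tests every pair $(X,Y)$ with the empty conditioning set, so the oracle removes precisely those edges for which $X \indep Y \mid \sel$. Applying the anytime-FCI orientation rules then produces a $0$-representing PAG, which is the $n=0$ case of \lemref{lem:ancestral-relations}.

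For the inductive step, assume that at the start of iteration $r$ the graph $\pag$ is $(r-1)$-representing. I would verify three things. First, soundness of edge removal: the oracle returns only true d-separations, so no edge present in an $r$-representing PAG is ever dropped. Second, completeness of edge removal: if $A,B$ are m-separated at level $r$ via some minimal separating set of size $r$, then by Corollary~\ref{cor:pds-path} applied to $\pag$ there exists such a set $\mathbf{Z}$ in which every $Z\in\mathbf{Z}$ is reached from $A$ or $B$ by a PDS-path whose intermediate nodes also lie in $\mathbf{Z}$. This $\mathbf{Z}$ satisfies the ICD-Sep conditions, so \texttt{PDSepRange} enumerates it and the oracle identifies the independence, causing the edge to be removed. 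Third, orientation correctness: once the skeleton has been refined by all level-$r$ independences, \lemref{lem:ancestral-relations} guarantees that the anytime-FCI orientation phase turns it into an $r$-representing PAG, closing the induction.

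Pushing the induction up to $r=|\obs|-2$ and noting that the final iteration invokes the complete orientation ruleset of \citet{zhang2008completeness} gives the full PAG of the $\obs$-equivalence class of $\cdag$. If the early-termination clause $done=\mathrm{True}$ fires sooner, then no edge of $\pag$ admits any conditioning set of size $r$ satisfying the ICD-Sep conditions, and by Corollary~\ref{cor:pds-path} the same obstruction transfers to all larger sizes; hence $\pag$ already represents the full $\obs$-equivalence class, and the subsequent orientation step completes it.

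The delicate part of the argument is step two of the inductive step: turning the purely existential statement ``a minimal separating set of size $r$ exists'' into the structural statement ``a minimal separating set of size $r$ exists whose members are jointly witnessed by PDS-paths contained inside the set.'' This is exactly what \lemref{lem:pds-path-single} and Corollary~\ref{cor:pds-path} are engineered to provide, so the hard work is really done by those lemmas; the remaining pieces are bookkeeping consequences of oracle soundness and \lemref{lem:ancestral-relations}. One small care point is that the PDS-trees used by ICD are built from the intermediate $(r-1)$-representing $\pag$, which may have more edges and more circle endmarks than the ultimate target PAG, but this only makes more candidate sets eligible and thus cannot weaken completeness.
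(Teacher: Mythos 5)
Your proposal is correct and follows essentially the same route as the paper's proof: induction on $r$ with Corollary~\ref{cor:pds-path} supplying completeness of edge removal and \lemref{lem:ancestral-relations} supplying orientation correctness, plus the same early-termination argument (the paper derives the monotone emptiness of the candidate-set family directly from the ICD-Sep conditions rather than from the corollary, and it phrases its base case at $r=1$ with an explicit neighborhood argument rather than at $r=0$, but these are cosmetic differences).
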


We prove by mathematical induction. In each induction step $r+1$ we prove using Corollary \ref{cor:pds-path} that given an $r$-representing PAG, ICD iteration $r+1$ finds all conditional independence relations having a minimal conditioning set of size $r+1$, and removes corresponding edges. By \lemref{lem:ancestral-relations}, orientation of the resulting graph results in an $(r+1)$-representing PAG. Essentially, we prove that a minimal separating set complies with the ICD-Sep conditions, ensuring its identification in \algref{alg:Alg}-line 9.

\subsection{Efficiency Analysis}

We discuss the number of CI test required by ICD with respect to the number of observed variables $|\obs|$ for learning an $n$-representing PAG ($n$-$\obs$-\textbf{equivalence}). Namely, the complexity for returning a PAG after $n+1$ iterations (recall that ICD is anytime).
Let $\mathbb{D}^n$ be the class of causal DAGs for which the resulting PAG is also completed\footnote{A completed PAG represents an equivalence class of MAGs such that no MAG can be ruled out given all CI relations. Not to be confused with a complete graph in which every node is connected to every other node.}. For all $\cdagset{\obs}{\sel}{\lat}\in \mathbb{D}^n$, $\forall A,B\in\obs$, if $\exists\mathbf{Z}\subset\obs$, such that $A \indep B | \mathbf{Z}\cup\sel$, then there exists a set $\mathbf{Z}'\subset\obs$, such that $A \indep B | \mathbf{Z}'\cup\sel$ and $|\mathbf{Z}'|\leq n$.
That is, its observable d-separation relations have at most $n$ nodes in their minimal separating sets. Nevertheless, for this class of DAGs, ICD may not terminate naturally after $n+1$ iterations. ICD terminates naturally after $n+1$ iterations (and returns a completed PAG) if in the true underlying PAG, $n$ is the size of the largest set complying with the ICD-Sep conditions. Consequently, $n$ is the largest conditioning set size considered by ICD.

The ICD algorithm starts with a complete graph and consists of a single loop, indexed by $r$. At iteration $r$, ICD considers in the worst case $\binom{|\obs|}{2}$ edges, and for each edge, up to $2\binom{|\obs|-2}{r}$ conditioning sets. Thus, the total number of CI tests is bounded by $N_{\mathrm{max}}$,
\begin{equation}
    N_{\mathrm{max}} = 2\binom{|\obs|}{2}\sum_{r=0}^n \binom{|\obs|-2}{r}.
\end{equation}

In practice, the number of CI tests is significantly smaller. Firstly, up to iteration $r$ it is ensured that \emph{all} the edges between nodes that are m-separated, in the true underlying MAG, by conditioning sets of sizes up to $r$ are removed. Secondly, the resulting PAG after each iteration is oriented using FCI orientation rules. These two operations reduce the sizes of PDS-trees in successive iterations, which leads to fewer $\ICDSep$ sets to consider as conditioning sets. 

\subsection{Stability}

Constraint-based causal discovery algorithms rely on the accuracy of CI tests. In general, CI tests errors in early stages of an algorithm may lead to errors in later stage. For example, erroneously removing an edge in an early stage may lead to erroneously keeping an edge between nodes that are m-separated in the true underlying MAG, which in turn may lead to additional errors. Thus, in general it is desired that a causal discovery algorithm relies in early stages on statistical tests that have higher statistical power than statistical tests in later stages. Commonly, when limited data is available, statistical CI tests suffer from poor estimates of the statistic for large conditioning sets, compared to estimates for small conditioning sets. Thus, it is desired to use CI tests having small conditioning sets in early stages of the algorithm.

The FCI algorithm employs the PC algorithm \citep{spirtes2000} as an initial stage. The second stage of FCI relies on the accuracy of the resulting skeleton, where subsets of $\PDSepop$ are created based on this skeleton and used for further independence testing. The PC algorithm iterates over conditioning sets sizes and possibly concludes with CI tests having large conditioning sets. This might render the FCI algorithm unstable given limited database size. 

The ICD algorithm benefits from a single iterative loop over conditioning set sizes (in contrast to FCI that has two loops that are executed consecutively). This ensures that CI with small conditioning set sizes are tested before CI test having larger conditioning sets. Thus, ICD is expected to be more stable than FCI and its related algorithms that have two iterative loops over the conditioning set sizes.

\section{Experimental Evaluation\label{sec:Experiments}}

We evaluate the performance of ICD in terms of number of required CI tests and accuracy of the learned structures, and compare it to the performance of FCI \citep{spirtes2000}, FCI+ \citep{claassen2013learning}, and RFCI \citep{colombo2012learning}.

In all the experiments we follow a procedure, similar to the one described by \citet{colombo2012learning} for generating random graphs with latent confounders. We create an adjacency matrix $\boldsymbol{A}$ for variables $\obs\cup\lat$ of DAG $\cdagset{\obs}{\lat}{\sel=\emptyset}$ by independent realization of $\mathrm{Bernoulli}(\nicefrac{\rho}{(n-1)})$ in the upper triangle. If the resulting DAG is unconnected, we repeat until a connected DAG is sampled.
For each DAG, we sample half of the parentless nodes that have at least two children and assign them to be the latent set $\lat$ (making sure there is at least one). The remaining nodes are the observed set $\obs$.

\subsection{Number of Required CI Tests when using a Perfect CI Oracle}

In the following experiments we evaluate the number of required CI tests by ICD, FCI, FCI+, and RFCI. We also analyze the conditioning set sizes of the required CI tests, since in many functions used for testing CI, the statistical power decreases and the computational complexity grows exponentially with the conditioning set size.
For the following experiments in this section, we sample 100 random DAGs having $n\in\{15, 20, 25, 35\}$ nodes with a connectivity factor of $\rho=2$ (25 DAGs per graph size). The same DAGs are used by each of the compared algorithms allowing a per-DAG comparison. A perfect CI oracle is implemented to returns d-separation relations in the true DAG.

\subsubsection{ICD Compared to FCI\label{sec:FCIvsICD}}

\begin{figure}
  \centering
  \includegraphics[width=0.32\linewidth]{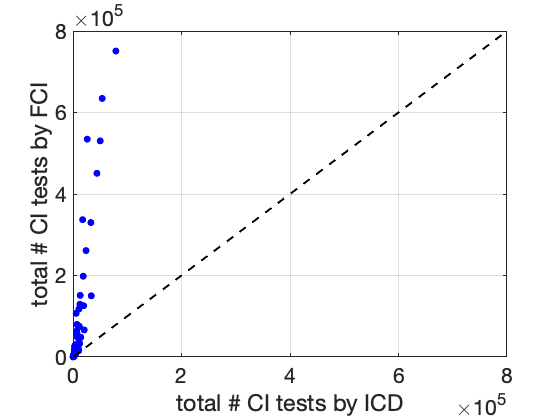}
  \includegraphics[width=0.32\linewidth]{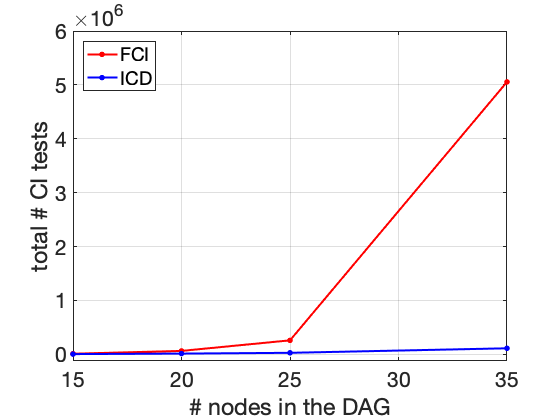}\\
  (a)\hspace{0.3\linewidth}(b)

  \caption{Total number of CI tests.
  (a) A scatter plot using all DAGs in the experiment (ICD requires fewer CI tests than FCI for all the 100 tested DAGs).
  (b) Average total number of CI tests as a function of graph size. 
  \label{fig:FCIICD_total-ci}}
\end{figure}

\begin{figure}
  \centering
  \includegraphics[width=0.245\textwidth]{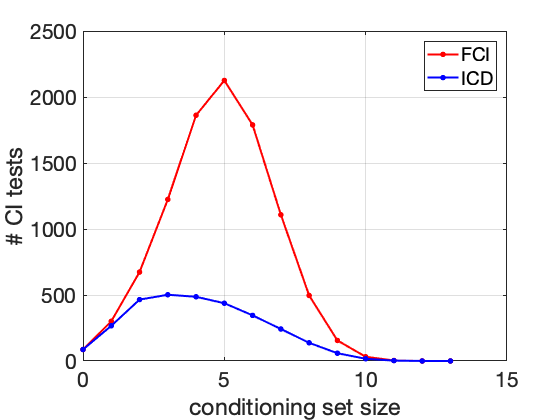}
  \includegraphics[width=0.245\textwidth]{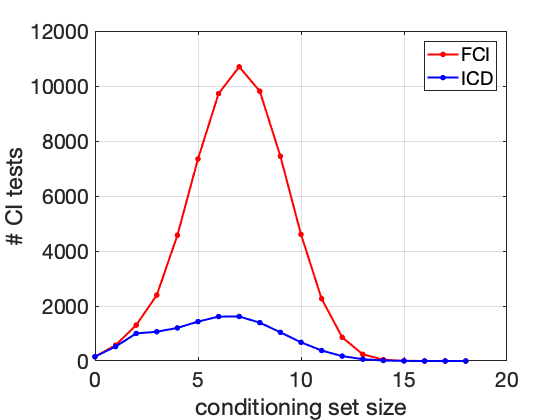}
  \includegraphics[width=0.245\textwidth]{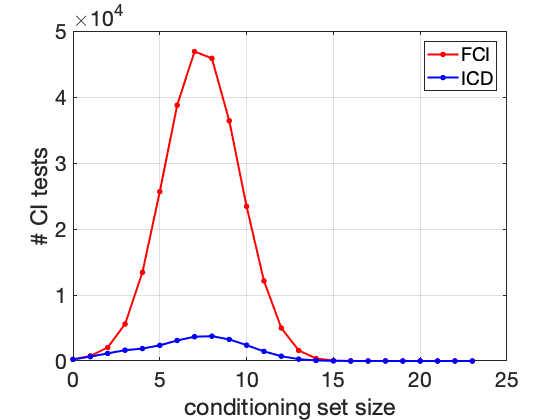}
  \includegraphics[width=0.245\textwidth]{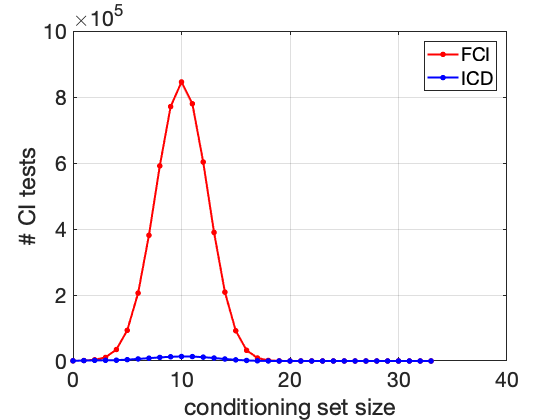}
  \\
  (a)\hspace{0.222\linewidth}(b)\hspace{0.222\linewidth}(c)\hspace{0.222\linewidth}(d)
  \caption{Average number of CI tests per conditioning set size for different graph sizes: (a) 15 nodes, (b) 20 nodes, (c) 25 nodes, (d) 35 nodes.\label{fig:FCIICD_ci-cond-size}}
\end{figure}

In this experiment we compare the ICD algorithm, using only the necessary ICD-Sep conditions (1 \& 2), to the FCI algorithm---both are anytime, sound, and complete. ICD-Sep conditions 1 \& 2 serve the key idea of ICD for constructing conditioning sets---\emph{tying the condition set size to its distance from the tested nodes}.

From \figref{fig:FCIICD_total-ci} (a) it is evident that the ICD algorithm requires significantly fewer CI tests compared to FCI for all 100 tested graphs, and that this advantage of ICD is more dominant for graphs that require a larger number of CI tests. From \figref{fig:FCIICD_total-ci} (b) we find that the total number of CI tests required by ICD increases significantly more slowly with the graph size, compared to FCI. This difference is also evident in difference in run-times. We implemented FCI such that it uses the same routines as ICD, executed both algorithms on a single core of an Intel\textsuperscript{\textregistered} Xeon\textsuperscript{\textregistered} CPU, and measured runtime. The ratio ${\text{FCI-runtime}}/{\text{ICD-runtime}}$ for graphs with 15, 20, 25, and 35 nodes is 1.3, 1.8, 2.9, and 5.6, respectively. As expected, this ratio increases with graph size.

In \figref{fig:FCIICD_ci-cond-size}, for each tested graph size, we depict the average number of required CI tests per conditioning set size. There are three key observations, (1) ICD requires fewer CI tests than FCI for any conditioning set size, (2) the largest difference is evident for conditioning set sizes for which FCI required the most CI tests, and (3) this difference in the number of CI tests per conditioning set size increases with the graph size.

\subsubsection{ICD Compared to FCI+ and RFCI\label{sec:FCIpRFCIvsICD}}

In this experiment we compare the ICD algorithm, using all ICD-Sep conditions, to the FCI+ and RFCI algorithms, which are improved versions of FCI, reducing the required number of CI tests. Both FCI+ and RFCI are sound. FCI+ is also complete, whereas RFCI is aimed at reducing the number of CI tests (compared to FCI) at the cost of not being complete. We were aided by the R package pcalg \citep{pcalg} for evaluating these algorithms.

In \figref{fig:FCIpRFCIICD_total-ci} (a) it is demonstrated that ICD requires fewer CI tests than both FCI+ and RFCI\footnote{We found RFCI to require slightly fewer CI tests than FCI+ for all tested DAGs.}.
On average, as evident from \figref{fig:FCIpRFCIICD_total-ci} (b) the number of CI tests required by FCI+ and RFCI increases similarly with graph size, whereas this number for ICD grows significantly more slowly.

Lastly, we analyze the number of CI tests per conditioning set size, per graph size. Our observation from \figref{fig:FCIpRFCIICD_ci-cond-size} is threefold: (1) ICD requires fewer CI tests than FCI+ and RFCI for any conditioning set size, (2) the largest difference between ICD and FCI+/RFCI is evident for conditioning set sizes for which FCI+/RFCI, and (3) this difference increases with graph size, whereas the difference between FCI+ and RFCI becomes smaller relatively to the difference between them and ICD.

\begin{figure}
  \centering
  \includegraphics[width=0.32\linewidth]{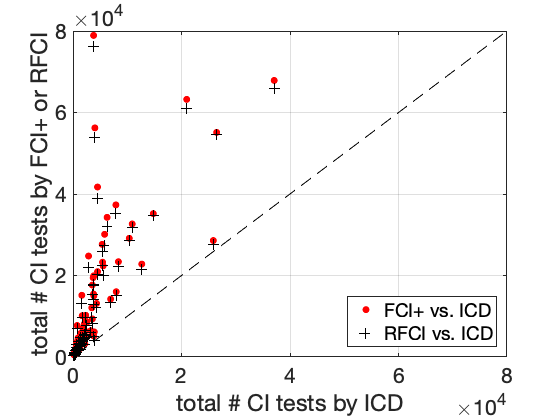}
  \includegraphics[width=0.32\linewidth]{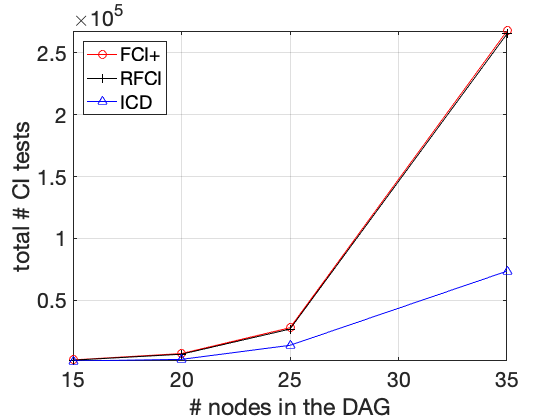}\\
  (a)\hspace{0.3\linewidth}(b)

  \caption{Total number of CI tests.
  (a) A scatter plot using all DAGs in the experiment (ICD requires fewer CI tests than FCI+ and RFCI for all the 100 tested DAGs).
  (b) Average total number of CI tests as a function of the graph size. 
  \label{fig:FCIpRFCIICD_total-ci}}
\end{figure}

\begin{figure}
  \centering
  \includegraphics[width=0.245\textwidth]{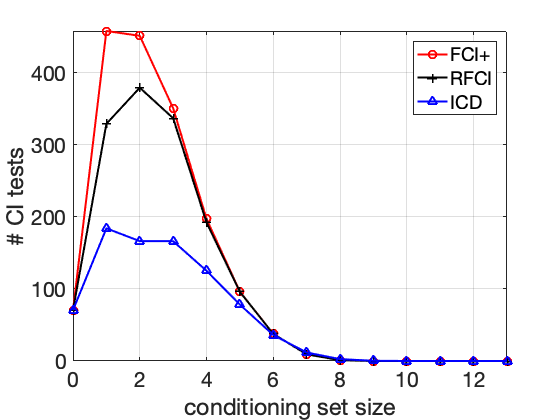}
  \includegraphics[width=0.245\textwidth]{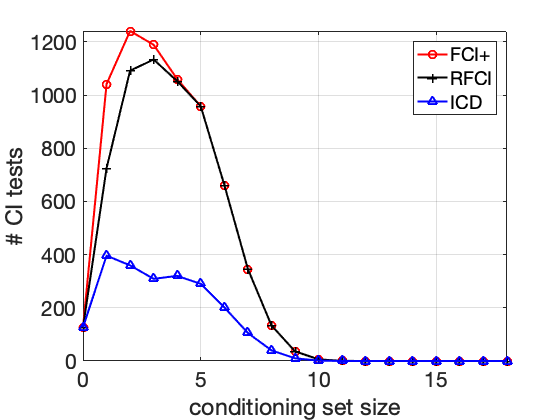}
  \includegraphics[width=0.245\textwidth]{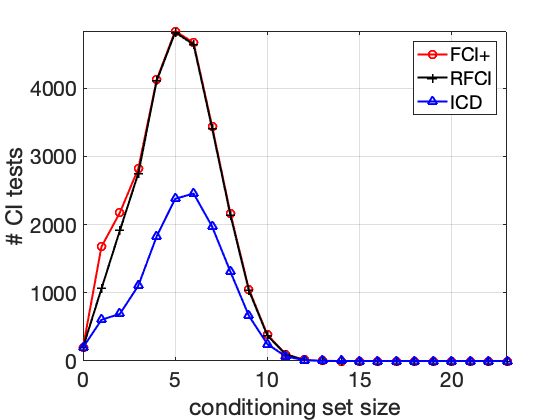}
  \includegraphics[width=0.245\textwidth]{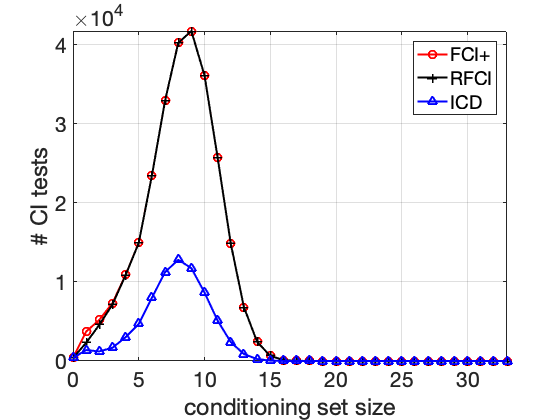}
  \\
  (a)\hspace{0.222\linewidth}(b)\hspace{0.222\linewidth}(c)\hspace{0.222\linewidth}(d)
  \caption{Average number of CI tests per conditioning set size for different graph sizes: (a) 15 nodes, (b) 20 nodes, (c) 25 nodes, (d) 35 nodes.\label{fig:FCIpRFCIICD_ci-cond-size}}
\end{figure}

\subsection{Structural Accuracy}

In the following experiment we evaluate the accuracy of learned structures and the required number of statistical CI tests.
To this end, we sample 100 DAGs, each having 15 nodes and an expected neighborhood 2, and quantify each edge of the DAGs by sampling from Uniform([-0.5, -2.0]  [0.5, 2.0]). A probabilistic model is created by treating each node value as normally distributed with standard deviation 1, and mean being a weighted sum of the patents' values. From this model, for each of the 100 DAGs, we sample 5 data sets having sizes [100, 200, 500, 1000, 3000]. For each of the 500 data sets we learn graphical models using FCI, FCI+, RFCI, and ICD.

We measure the accuracy of the skeleton by calculating false-positive ratio (FPR), false-negative ratio (FNR) and F1-score. Correctly identifying the presence of an edge is considered true-positive. We measure the accuracy of edge orientation by calculating the percentage of correctly oriented edge-marks. Finally, we also count the number of CI tests required by each algorithm per data set. The average values (over 100 graphs) of the number of CI tests, skeleton F1 score, and orientation accuracy are summarized in \figref{fig:StructuralAccuracy}.

From the experiments, it is evident that ICD requires significantly fewer CI tests. Compared to the other methods, ICD has higher skeleton FPR (extra-edges), but lower skeleton FNR (missing edges, erroneously-identified independence relation). Overall, ICD has the highest F1 score. Lastly, it is evident that ICD has an advantage in orientation accuracy over the other methods.

\begin{figure}
  \centering
  \includegraphics[width=0.325\textwidth]{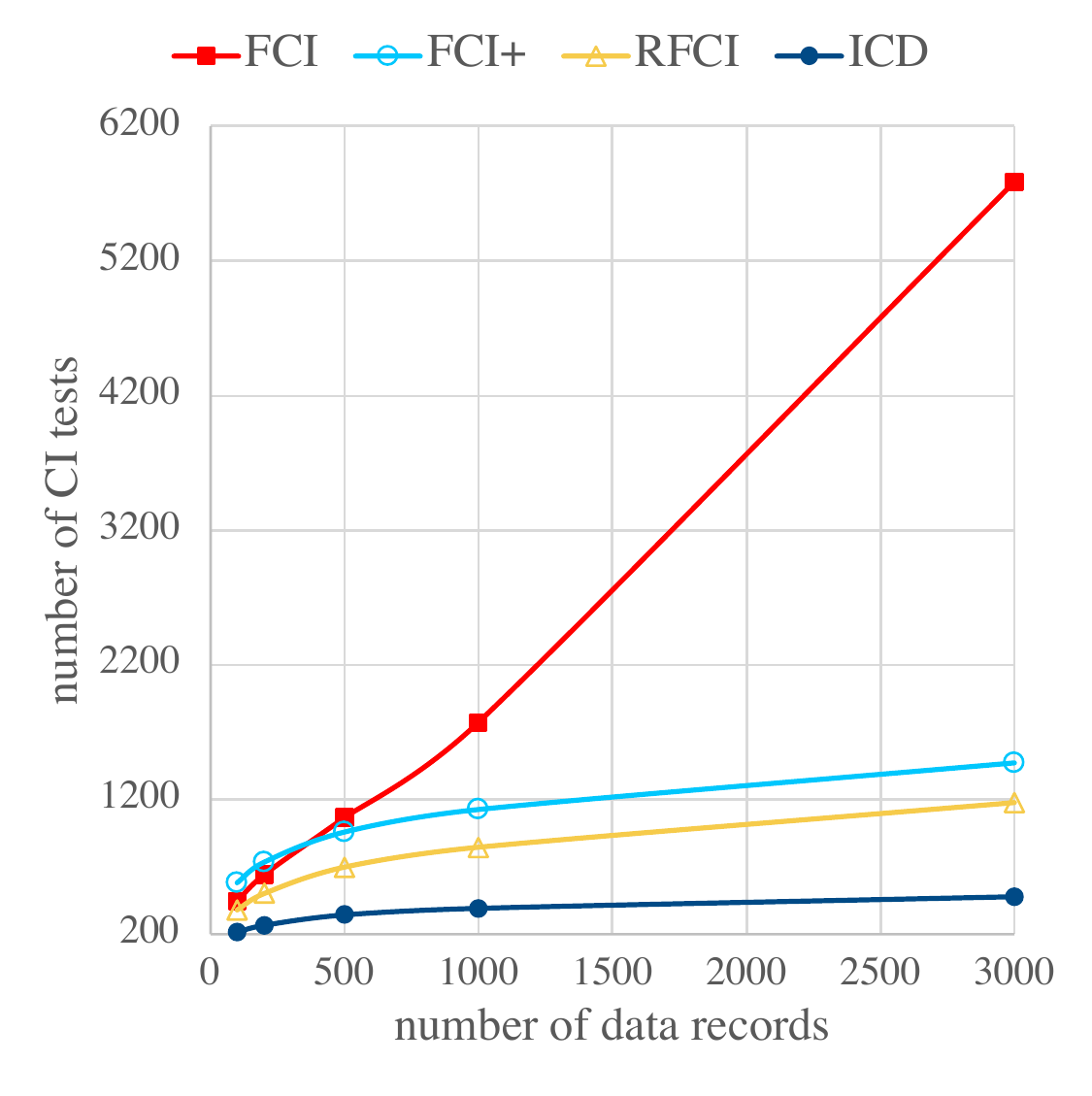}
  \includegraphics[width=0.325\textwidth]{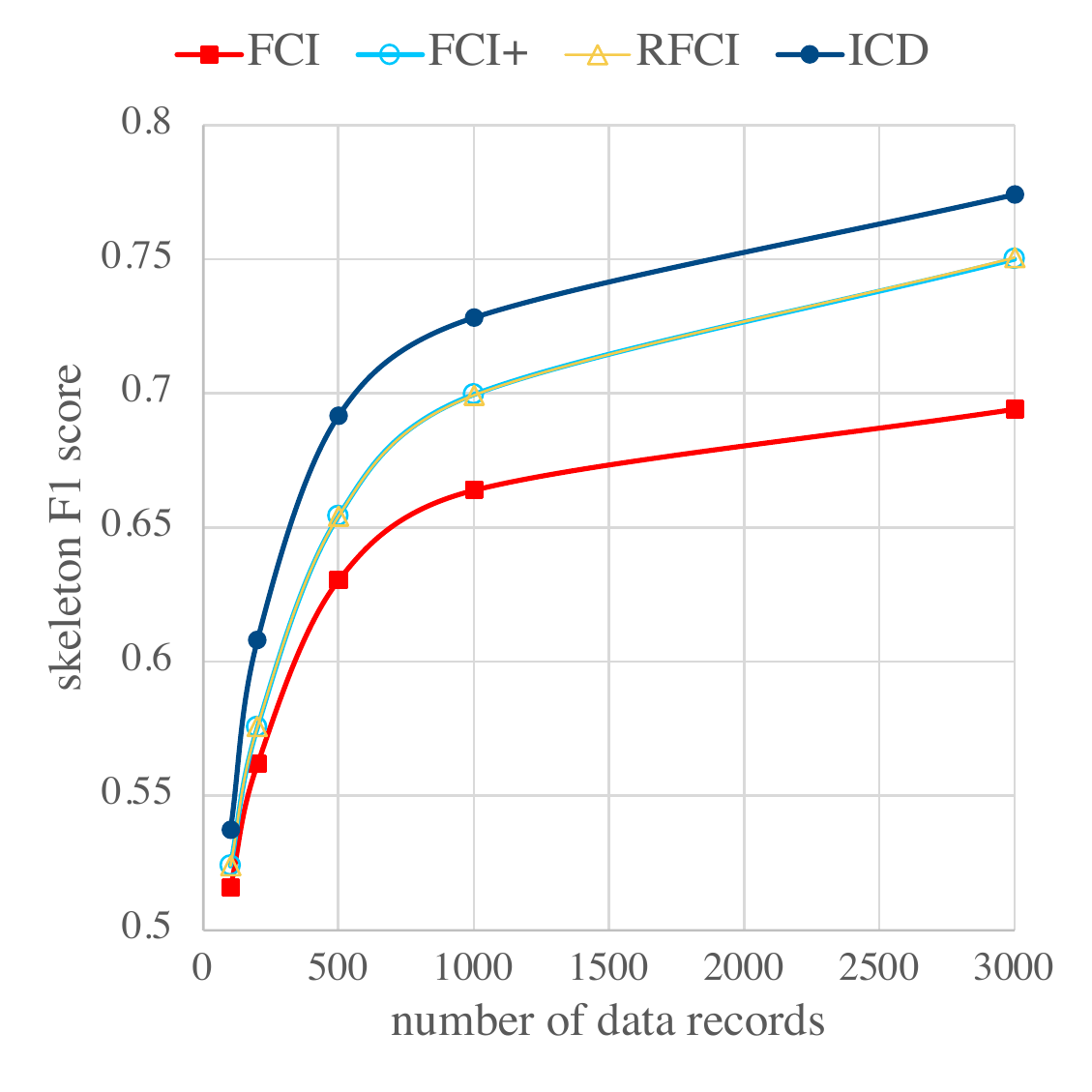}
  \includegraphics[width=0.325\textwidth]{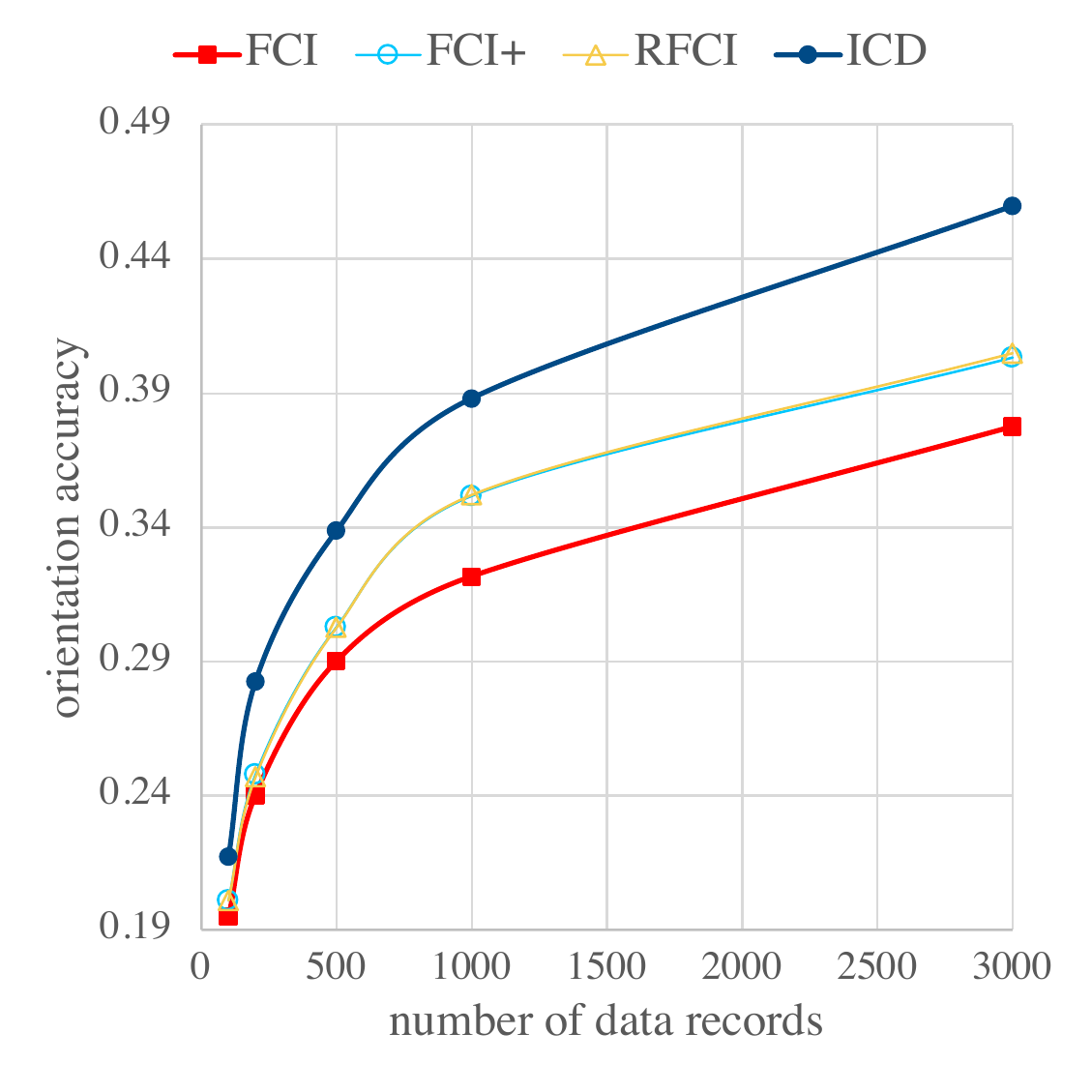}
  \\
  (a)\hspace{0.3\linewidth}(b)\hspace{0.3\linewidth}(c)
  \caption{Average (a) number of CI tests, (b) accuracy of skeleton, and (c) accuracy of oriented edges, as a function of dataset size for the FCI, FCI+, RFCI, and ICD algorithms.\label{fig:StructuralAccuracy}}
\end{figure}

\section{Conclusions}

We presented ICD, an anytime, sound, and complete causal discovery algorithm for learning PAGs representing $n$-$\obs$-\textbf{equivalence} classes. The ICD algorithm is a simple procedure that consists of a single loop over conditioning set sizes of CI tests. Having a single loop ensures that CI tests with small conditioning sets are tested before CI tests having larger conditioning set sizes. This can lead to greater stability in practical cases.

The ICD algorithm gradually increases the search radius, from a local neighborhood to the entire graph, for separating sets around connected nodes, resulting in an efficient search procedure. In early iterations, where the graph is dense and a small number of edges are oriented, the search for a separating set is localized. In later iterations, where the graph is sparser and more edges are oriented, a global search for a separating set becomes more efficient.

An important difference of the proposed ICD algorithm from FCI and its related algorithms is that, right from the outset it considers nodes for the conditioning set that are not in the local neighborhood of the tested nodes.
One might suspect that this could result in a high number of CI tests evaluated by the ICD algorithm compared to the FCI algorithm. However when proceeding from one iteration to the next, the ICD algorithm reduces the number of nodes to consider for the conditioning sets by complete orientation in each iteration, and by limiting the distance of the conditioning nodes from the tested nodes. 

Finally, from the experimental results, the ICD algorithm requires significantly fewer CI tests compared to FCI, and its related efficient algorithms FCI+ and RFCI, especially for large conditioning sets. Moreover, it is evident that the advantage of ICD increases with the graph size. In addition, ICD learns more accurate causal graphs. We believe that these advantages can be appealing to many real-world applications in domains such as economics, health, and social sciences.

\bibliography{ICD_ICML2021}

\begin{thebibliography}{22}
\providecommand{\natexlab}[1]{#1}
\providecommand{\url}[1]{\texttt{#1}}
\expandafter\ifx\csname urlstyle\endcsname\relax
  \providecommand{\doi}[1]{doi: #1}\else
  \providecommand{\doi}{doi: \begingroup \urlstyle{rm}\Url}\fi

\bibitem[Chickering(2002)]{chickering2002optimal}
Chickering, D.~M.
\newblock Optimal structure identification with greedy search.
\newblock \emph{Journal of machine learning research}, 3\penalty0
  (Nov):\penalty0 507--554, 2002.

\bibitem[Claassen et~al.(2013)Claassen, Mooij, and
  Heskes]{claassen2013learning}
Claassen, T., Mooij, J.~M., and Heskes, T.
\newblock Learning sparse causal models is not {NP}-hard.
\newblock In \emph{Uncertainty in Artificial Intelligence}, pp.\  172.
  Citeseer, 2013.

\bibitem[Colombo et~al.(2012)Colombo, Maathuis, Kalisch, and
  Richardson]{colombo2012learning}
Colombo, D., Maathuis, M.~H., Kalisch, M., and Richardson, T.~S.
\newblock Learning high-dimensional directed acyclic graphs with latent and
  selection variables.
\newblock \emph{The Annals of Statistics}, pp.\  294--321, 2012.

\bibitem[Cooper \& Herskovits(1992)Cooper and Herskovits]{cooper1992bayesian}
Cooper, G.~F. and Herskovits, E.
\newblock A {B}ayesian method for the induction of probabilistic networks from
  data.
\newblock \emph{Machine learning}, 9\penalty0 (4):\penalty0 309--347, 1992.

\bibitem[Hoyer et~al.(2009)Hoyer, Janzing, Mooij, Peters, and
  Sch{\"o}lkopf]{hoyer2009nonlinear}
Hoyer, P.~O., Janzing, D., Mooij, J.~M., Peters, J., and Sch{\"o}lkopf, B.
\newblock Nonlinear causal discovery with additive noise models.
\newblock In \emph{Advances in neural information processing systems}, pp.\
  689--696, 2009.

\bibitem[Jaber et~al.(2018)Jaber, Zhang, and Bareinboim]{jaber2018causal}
Jaber, A., Zhang, J., and Bareinboim, E.
\newblock Causal identification under markov equivalence.
\newblock In \emph{34th Conference on Uncertainty in Artificial Intelligence},
  pp.\  978--987. Association for Uncertainty in Artificial Intelligence
  (AUAI), 2018.

\bibitem[Jaber et~al.(2019)Jaber, Zhang, and Bareinboim]{jaber2019causal}
Jaber, A., Zhang, J., and Bareinboim, E.
\newblock Causal identification under markov equivalence: Completeness results.
\newblock In \emph{International Conference on Machine Learning}, pp.\
  2981--2989, 2019.

\bibitem[Kalisch et~al.(2012)Kalisch, M\"achler, Colombo, Maathuis, and
  B\"uhlmann]{pcalg}
Kalisch, M., M\"achler, M., Colombo, D., Maathuis, M.~H., and B\"uhlmann, P.
\newblock Causal inference using graphical models with the {R} package {pcalg}.
\newblock \emph{Journal of Statistical Software}, 47\penalty0 (11):\penalty0
  1--26, 2012.
\newblock URL \url{https://www.jstatsoft.org/article/view/v047i11}.

\bibitem[Nisimov et~al.(2021)Nisimov, Gurwicz, Rohekar, and
  Novik]{nisimov2021improving}
Nisimov, S., Gurwicz, Y., Rohekar, R.~Y., and Novik, G.
\newblock Improving efficiency and accuracy of causal discovery using a
  hierarchical wrapper.
\newblock In \emph{Uncertainty in Artificial Intelligence (UAI 2021), the 4th
  Workshop on Tractable Probabilistic Modeling}, 2021.

\bibitem[Ogarrio et~al.(2016)Ogarrio, Spirtes, and Ramsey]{ogarrio2016hybrid}
Ogarrio, J.~M., Spirtes, P., and Ramsey, J.
\newblock A hybrid causal search algorithm for latent variable models.
\newblock In \emph{Conference on Probabilistic Graphical Models}, pp.\
  368--379, 2016.

\bibitem[Pearl(2009)]{pearl2009causality}
Pearl, J.
\newblock \emph{Causality: Models, Reasoning, and Inference}.
\newblock Cambridge university press, second edition, 2009.

\bibitem[Pearl(2010)]{pearl2010introduction}
Pearl, J.
\newblock An introduction to causal inference.
\newblock \emph{The international journal of biostatistics}, 6\penalty0 (2),
  2010.

\bibitem[Peters et~al.(2017)Peters, Janzing, and
  Sch{\"o}lkopf]{peters2017elements}
Peters, J., Janzing, D., and Sch{\"o}lkopf, B.
\newblock \emph{Elements of causal inference}.
\newblock The MIT Press, 2017.

\bibitem[Richardson \& Spirtes(2002)Richardson and
  Spirtes]{richardson2002ancestral}
Richardson, T. and Spirtes, P.
\newblock Ancestral graph markov models.
\newblock \emph{The Annals of Statistics}, 30\penalty0 (4):\penalty0 962--1030,
  2002.

\bibitem[Rohekar et~al.(2018)Rohekar, Gurwicz, Nisimov, Koren, and
  Novik]{rohekar2018bayesian}
Rohekar, R.~Y., Gurwicz, Y., Nisimov, S., Koren, G., and Novik, G.
\newblock {B}ayesian structure learning by recursive bootstrap.
\newblock In \emph{Advances in Neural Information Processing Systems
  (NeurIPS)}, 2018.

\bibitem[Shimizu et~al.(2006)Shimizu, Hoyer, Hyv{\"a}rinen, and
  Kerminen]{shimizu2006linear}
Shimizu, S., Hoyer, P.~O., Hyv{\"a}rinen, A., and Kerminen, A.
\newblock A linear non-gaussian acyclic model for causal discovery.
\newblock \emph{Journal of Machine Learning Research}, 7\penalty0
  (Oct):\penalty0 2003--2030, 2006.

\bibitem[Spirtes(2001)]{spirtes2001anytime}
Spirtes, P.
\newblock An anytime algorithm for causal inference.
\newblock In \emph{International Conference on Artificial Intelligence and
  Statistics (AISTATS)}, pp.\  278--285, 2001.

\bibitem[Spirtes(2010)]{spirtes2010introduction}
Spirtes, P.
\newblock Introduction to causal inference.
\newblock \emph{Journal of Machine Learning Research}, 11\penalty0
  (May):\penalty0 1643--1662, 2010.

\bibitem[Spirtes et~al.(1999)Spirtes, Meek, and
  Richardson]{spirtes1999algorithm}
Spirtes, P., Meek, C., and Richardson, T.
\newblock An algorithm for causal inference in the presence of latent variables
  and selection bias.
\newblock \emph{Computation, causation, and discovery}, 21:\penalty0 1--252,
  1999.

\bibitem[Spirtes et~al.(2000)Spirtes, Glymour, and Scheines]{spirtes2000}
Spirtes, P., Glymour, C., and Scheines, R.
\newblock \emph{Causation, Prediction and Search}.
\newblock {MIT} Press, 2nd edition, 2000.

\bibitem[Yehezkel \& Lerner(2009)Yehezkel and Lerner]{yehezkel2009rai}
Yehezkel, R. and Lerner, B.
\newblock {B}ayesian network structure learning by recursive autonomy
  identification.
\newblock \emph{Journal of Machine Learning Research (JMLR)}, 10\penalty0
  (Jul):\penalty0 1527--1570, 2009.

\bibitem[Zhang(2008)]{zhang2008completeness}
Zhang, J.
\newblock On the completeness of orientation rules for causal discovery in the
  presence of latent confounders and selection bias.
\newblock \emph{Artificial Intelligence}, 172\penalty0 (16-17):\penalty0
  1873--1896, 2008.

\end{thebibliography}
\bibliographystyle{icml2021}
\clearpage

\appendix

\section*{\Large Supplementary}

\section{Correctness and Completeness of the ICD Algorithm}
\setcounter{lem}{0}
\setcounter{cor}{0}
\setcounter{prp}{0}
\setcounter{algocf}{0}

In this section we provide a detailed proof for the correctness and completeness of the ICD algorithm.
For easier referencing we describe ICD in \algref{alg:AlgApp}, and describe the ICD-Sep conditions. A set $\mathbf{Z}$ is a subset of $\ICDSep(A,B)$ given $r\in\{0,\ldots,|\obs|-2\}$, if and only if
\begin{enumerate}
    \item $|\mathbf{Z}|=r$,\label{ICD-size}
    \item $\forall Z\in\mathbf{Z}$, there exists a PDS-path $\PDSPath{A}{B}{Z}$ such that,\label{ICD-node-constraints}
    \begin{enumerate}
        \item $|\PDSPath{A}{B}{Z}| \leq r$ and\label{ICD-pathlen}
        \item every node on $\PDSPath{A}{B}{Z}$ is in $\mathbf{Z}$, and \label{ICD-closed-set}
    \end{enumerate}
    \item $\forall Z\in\mathbf{Z}$, node $Z$ is a possible ancestor of $A$ or $B$ (not a necessary condition).\label{ICD-possible-ancestor}
\end{enumerate}

\begin{lem}\label{lem:pds-path-single}
    Let $\pag$ be a PAG $n$-representing DAG $\cdagset{\obs}{\sel}{\lat}$. Denote $A,B$ a pair of nodes from $\obs$ that are connected in $\pag$ and disconnected in $\cdag$, and such that $A$ is not an ancestor of $B$ in $\cdag$.
    
    If $A \indep B~|~[\mathbf{Z}']\cup\sel$, where $\mathbf{Z}'\subset\obs$ is a minimal separating set having size $n+1$, then there exists a subset $\mathbf{Z}\subset\obs$ having the same size of $n+1$ such that that $A \indep B~|~\mathbf{Z}\cup\sel$, and for every node $Z \in \mathbf{Z}$ there exists a PDS-path $\PDSPath{A}{B}{Z}$ in $\pag$, such that every node $V$ on the PDS-path is also in $\mathbf{Z}$.
\end{lem}

\begin{proof}

It was previously shown that a minimal separating set for $A$ and $B$, where $A$ is not an ancestor of $B$, is a subset of $\DSepop(A,B)$ \citetext{\citealp[page 134 and Theorem 6.2]{spirtes2000}; \citealp{spirtes1999algorithm}}.
By definition, a node $Z$ is in $\DSepop(A,B)$ if and only if in the MAG there is a path between $A$ and $Z$ such that every node, except for the end points, is: 1. a collider and 2. an ancestor of A or B. Denote such path DS-Path (an inducing path for $\langle \lat,\sel\rangle$). For every DS-Path in a MAG, there exists a PDS-path (possible-DS-path) in the corresponding PAG. In an $n$-representing PAG, we can rule out a path from being a DS-Path in the MAG if it contains at least one sub-path $X$ o--o $Y$ o--o $Z$, where $X$ and $Z$ are not connected, or if one of the nodes on the path (except the end points) is not a possible ancestor of $A$ or $B$. For an $n$-representing PAG, if a path between $A$ and $B$ has been ruled out of being a DS-Path, then identifying additional independence relations with conditioning set size greater than $n$ will not result in transforming this path into a DS-path. That is, $X$ o--o $Y$ o--o $Z$, where $X$ and $Z$ are not connected will not become an unshielded collider, and new ancestral relations identified in ICD-iteration $n+1$ will not contradict ancestral relations identified in previous ICD-iterations \citetext{\citealp[cf.][]{spirtes2001anytime}}. Thus, for every DS-Path in the MAG, there is a PDS-Path in an $n$-representing PAG, consisting of the same sequence of nodes, which ensures identifying at least one minimal separating set between every pair $(A,B)$ that are m-separated in the MAG\footnote{\citet{spirtes2000} defined $\PDSepop$ as a super-set of $\DSepop$ based on the PDS-path generalization of DS-paths.}. 

From the minimality of a separating set $\mathbf{Z}$ for $(A,B)$, $\forall Z\in\mathbf{Z}$ there is an open path between $Z$ and $A$ conditioned on $(\mathbf{Z}\setminus Z)\cup\sel$, namely $A \nindep Z|(\mathbf{Z}\setminus Z)\cup\sel$. Otherwise $Z$ is redundant and $\mathbf{Z}$ is not minimal.
Let $\mathbf{C}$ be the set of nodes on a DS-path between $A$ and $Z$. A DS-path between $A$ and $Z$ is open (m-connected) conditioned on all the nodes between them on the path, $A\nindep Z | \mathbf{C}\cup\sel$. 
 For a PDS-path $\PDSPath{A}{B}{Z}$, $A \nindep Z | (\mathbf{Nodes}(\PDSPath{A}{B}{Z}\setminus \{A,Z\})\cup\sel$; namely, the PDS-path becomes an open path. Note that every sub-path starting from $A$ is also a PDS-Path. Thus, if $Z\in\mathbf{Z}$ then there exists a PDS-path connecting $A$ and $Z$ such that all the nodes on this path are in $\mathbf{Z}$. 
\end{proof}

\begin{cor}\label{cor:pds-path}

Let $\pag$ be a PAG $n$-representing DAG $\cdagset{\obs}{\sel}{\lat}$. Denote $A,B$ a pair of nodes from $\obs$ that are connected in $\pag$ and disconnected in $\cdag$.

If $A \indep B~|~[\mathbf{Z}']\cup\sel$, where $\mathbf{Z}'\subset\obs$ is a minimal separating set having size $n+1$, then there exists a subset $\mathbf{Z}\subset\obs$ having the same size of $n+1$ such that that $A \indep B~|~\mathbf{Z}\cup\sel$, and for every node $Z\in\mathbf{Z}$ there exists a PDS-path $\PDSPath{A}{B}{Z}$ or $\PDSPath{B}{A}{Z}$, where every node $V$ on the PDS-path is also in $\mathbf{Z}$.
\end{cor}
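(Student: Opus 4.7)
The plan is to derive the corollary from \lemref{lem:pds-path-single} by removing the hypothesis that $A$ is not an ancestor of $B$ via a symmetry/case-split argument. The observation is that the conclusion of the corollary is symmetric in $A$ and $B$ (a PDS-path in either direction), while \lemref{lem:pds-path-single} gives a PDS-path in one direction, rooted at the endpoint that is not an ancestor of the other. Since the CI statement $A \indep B \mid [\mathbf{Z}'] \cup \sel$ is itself symmetric in $A$ and $B$, we can always reorient the pair so that the hypothesis of the lemma is met.

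First I would invoke the fact that $\cdag$ is a DAG: by acyclicity, it is impossible that both $A$ is an ancestor of $B$ and $B$ is an ancestor of $A$ in $\cdag$ (the two nodes are distinct since they are connected in $\pag$). Hence at least one of the following holds: (i) $A$ is not an ancestor of $B$ in $\cdag$, or (ii) $B$ is not an ancestor of $A$ in $\cdag$. Next I would apply \lemref{lem:pds-path-single} in whichever orientation satisfies the ancestor hypothesis: in case (i), applying the lemma directly to the ordered pair $(A,B)$ produces a set $\mathbf{Z}\subset\obs$ with $|\mathbf{Z}|=n+1$, $A \indep B \mid \mathbf{Z}\cup\sel$, and for each $Z\in\mathbf{Z}$ a PDS-path $\PDSPath{A}{B}{Z}$ all of whose nodes lie in $\mathbf{Z}$. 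In case (ii), applying the lemma to the ordered pair $(B,A)$ — which is legitimate because connectedness in $\pag$ and the independence statement are both symmetric in $A$ and $B$ — produces the analogous set $\mathbf{Z}$ along with PDS-paths $\PDSPath{B}{A}{Z}$ for each $Z\in\mathbf{Z}$. Taking the disjunction of the two cases gives exactly the conclusion of the corollary.

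There is essentially no obstacle here beyond checking that the lemma's hypotheses transfer under swapping $A$ and $B$: the minimality of $\mathbf{Z}'$ as a separating set for $(A,B)$ is symmetric, the size constraint is unchanged, and connectedness in $\pag$ is a symmetric relation. The only point that warrants a brief remark is that the $n$-representing PAG $\pag$ does not depend on the ordering of the endpoints, so it is the same PAG used in the invocation of the lemma in either case. Thus the proof collapses to a single sentence noting that acyclicity guarantees at least one direction satisfies the ancestor condition, followed by an appeal to \lemref{lem:pds-path-single} in that direction.
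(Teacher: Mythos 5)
Your proposal is correct and matches the paper's own proof: the authors likewise note that by acyclicity the minimal separating set lies in $\DSepop(A,B)$ when $A$ is not an ancestor of $B$ and in $\DSepop(B,A)$ otherwise, and then invoke \lemref{lem:pds-path-single} in the corresponding orientation. Your write-up simply spells out the symmetry check in more detail.
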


\begin{proof}
The proof follows from \lemref{lem:pds-path-single}. Note that a minimal separating set for $A$ and $B$ is in $\DSepop(A,B)$ if $A$ is not an ancestor of $B$; otherwise, it is a subset of $\DSepop(B,A)$ if $B$ is not an ancestor of $A$.
\end{proof}

\begin{lem}\label{lem:ancestral-relations}
    Let $\pag$ be a PAG $n$-representing a causal DAG $\cdag$. Let $\mathcal{S}$ be a skeleton (unoriented graph) that results after removing edges from the skeleton of $\pag$ between every pair of nodes that are m-separated conditioned on a minimal separating set of size $n+1$.
    
    If $\mathcal{S}$ is oriented using anytime-FCI orientation rules, then the resulting graph is a PAG that $(n+1)$-represents the causal DAG $\cdag$.
    
\end{lem}
\begin{proof}
We refer to the proof for the anytime FCI algorithm \citep{spirtes2001anytime}. It was shown, that a skeleton for any pair of disjoint nodes $A,B\in\obs$, such that $A\indep B|[\mathbf{Z}]\cup\sel$ in $\cdag(\obs,\sel,\lat)$, where $\mathbf{Z}\subset\obs$ and $|\mathbf{Z}|<n$, can be safely oriented by first orienting v-structures and then using the iterative FCI-orientation rules. Namely, it is sound in the sense that every orientation (head `---\textgreater' or tail `---') also exists in all that MAGs in the equivalence class of the true underlying MAG. Importantly, subsequent removal of edges, using conditioning set sizes greater than $n+1$, will not invert the orientation of an edge-mark (a head will not be turned into a tail and vice versa), nor any oriented edge-mark (head or tail) will become invariant (`---o'). \citet{zhang2008completeness} proved the completeness of the orientation rules step. Note that the proofs by \citet{spirtes2001anytime} and \citet{zhang2008completeness}, both consider the presence of selection bias.
\end{proof}

\begin{prp}[Correctness and completeness of the ICD algorithm]
Let $\pag$ be a PAG representing a causal DAG $\cdagset{\obs}{\lat}{\sel}$ and let $\sindep$ be a conditional independence oracle that returns d-separation relation for $\obs$ in $\cdag$. 
If \algref{alg:AlgApp} is called with $\mathrm{Ind}$, then the returned PAG, after uninterrupted termination is $\pag$.
\end{prp}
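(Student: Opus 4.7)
The plan is to prove the proposition by induction on the iteration index $r$ of the outer loop in \algref{alg:AlgApp}, with the inductive hypothesis that the graph produced after iteration $r$, denoted $\pag_r$, is an $r$-representing PAG of $\cdagset{\obs}{\sel}{\lat}$. Once this is established, uninterrupted termination at $r = |\obs|-2$ (or earlier, via $done = \mathrm{True}$) yields a PAG that represents every CI relation entailed by $\cdag$, which is precisely the completed PAG $\pag$. For the base case $r=0$, the graph starts complete with `o' endpoints; \texttt{Iteration} tests marginal independence between every adjacent pair, removes edges accordingly, and then applies anytime-FCI orientation. Viewing the initial complete graph as vacuously $(-1)$-representing, \lemref{lem:ancestral-relations} yields that $\pag_0$ is $0$-representing.

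For the inductive step, suppose $\pag_r$ is $r$-representing and consider an ordered pair $(A,B)$ still adjacent in $\pag_r$ whose minimal separating set in $\cdag$ has size $r+1$. Corollary~\ref{cor:pds-path}, applied to $\pag_r$, supplies a separating set $\mathbf{Z}$ of size $r+1$ such that every $Z \in \mathbf{Z}$ admits a PDS-path $\PDSPath{A}{B}{Z}$ or $\PDSPath{B}{A}{Z}$ whose interior nodes are all in $\mathbf{Z}$. I will argue that this $\mathbf{Z}$ satisfies the ICD-Sep conditions for $(A,B)$ or for $(B,A)$: condition~1 by the cardinality $r+1$; condition~2(a) since a PDS-path whose interior is contained in an $(r+1)$-element set has length at most $r+1$; condition~2(b) by the closure property of Corollary~\ref{cor:pds-path}; and the optional condition~3 because minimal separating sets are subsets of $\DSepop$, whose elements are possible ancestors of $A$ or $B$. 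Consequently, \texttt{PDSepRange}$(A,B,r+1,\pag_r)$ enumerates $\mathbf{Z}$, the CI test in line~13 succeeds, and the edge is removed. No edge is removed erroneously, because the oracle $\sindep$ encodes d-separation faithfully. Applying \lemref{lem:ancestral-relations} to the skeleton obtained after all such removals, together with the anytime-FCI orientation, gives an $(r+1)$-representing PAG, completing the induction.

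The main obstacle is verifying that the tree-traversal in \texttt{PDSepRange} actually enumerates every set satisfying ICD-Sep conditions~1 and~2, in particular the specific $\mathbf{Z}$ supplied by Corollary~\ref{cor:pds-path}. The clean observation that unlocks this is that every prefix (along the PDS-tree of \dffref{dff:pds-tree}) of a PDS-path is itself a PDS-path, so the closure-under-prefixes requirement in condition~2(b) is exactly the property enforced by any valid traversal of a PDS-tree truncated at depth $r+1$; hence $\mathbf{Z}$ is guaranteed to appear among the candidate sets. Termination and completeness then follow: at $r = |\obs|-2$ no minimal separating set of larger size can exist, and completeness of the orientation rules invoked in the final iteration \citep{zhang2008completeness} ensures that every invariant edge-mark that can be resolved is resolved, so the output coincides with $\pag$. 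If $done$ becomes $\mathrm{True}$ earlier, no adjacent pair admits any further candidate conditioning set, meaning all CI relations with minimal separating sets of the remaining sizes are already reflected in the current graph, which therefore already equals $\pag$.
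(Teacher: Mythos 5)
Your proposal is correct and follows essentially the same route as the paper's proof: induction on $r$, using Corollary~\ref{cor:pds-path} to show the minimal separating set of size $r+1$ satisfies the ICD-Sep conditions and \lemref{lem:ancestral-relations} to justify the orientation step, with the same termination argument via emptiness of the candidate sets. The only cosmetic difference is that you fold the $r=1$ case into the general inductive step (which Corollary~\ref{cor:pds-path} indeed covers), whereas the paper treats it as an explicit base case with a direct neighborhood argument; your added detail verifying condition 2(a) from the cardinality bound is a welcome clarification rather than a deviation.
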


\begin{proof}
We prove by mathematical induction. In each induction step $r+1$ we prove that given an $r$-representing PAG, ICD iteration $r+1$ finds all conditional independence relations having a minimal conditioning set of size $r+1$, and removes corresponding edges. By \lemref{lem:ancestral-relations}, orientation of the resulting graph results in an $(r+1)$-representing PAG. Essentially, we prove that a minimal separating set complies with the ICD-Sep conditions, ensuring its identification in \algref{alg:AlgApp}-line 9. 

Let the true underlying DAG be $\cdagset{\obs}{\lat}{\sel}$, and $\pag$ be the graph returned after an ICD iteration. Throughout the proof $\mathbf{Z}\subset\obs$, and $A\in\obs, B\in\obs$ are any pair of nodes.

\textbf{\emph{Base step ($r=1$)}}. The first ICD iteration $r=0$ is trivial, where every pair of nodes is tested for marginal independence (ICD is initialized with a complete graph). From \lemref{lem:ancestral-relations}, the orientation of the graph using FCI-orientation rules returns a $0$-representing PAG.
We define our base case for the second ICD iteration $r=1$. Minimal separating set consisting of a single node are sought. Let $A\indep B|[\mathbf{Z}]\cup\sel$ in $\cdag$, such that $|\mathbf{Z}|=1$ (a single-node set). The ICD-Sep conditions for $r=1$ effectively restrict the search to the neighborhood of $A$ and $B$. Although there may be multiple separating single-node sets for $(A,B)$, there exist at least one in their neighborhood. Recall that by conditioning on a separating set, paths between $A$ and $B$ are blocked. Since we are considering single-node separating sets, there exists an active path that is blocked by a single node, such that it does not consist any collider (otherwise the collider is included in the separating set and the size is greater than 1). Thus, this path can be blocked by at least one of the neighbors of $A$ and $B$. This ensures that considering only neighbors of the tested nodes, all the independence relations with minimal separating sets of size one are identified, and corresponding edges are removed (\algref{alg:AlgApp}-lines 12--16). Following \lemref{lem:ancestral-relations}, orientation using FCI-orientation rules ensures that the resulting graph is a PAG that $1$-represents the causal DAG $\cdag$.

\textbf{\emph{Induction step ($r+1$)}}. Let $A\indep B|[\mathbf{Z}]\cup\sel$ in $\cdag$, such that $|\mathbf{Z}|=r+1$. From Corollary \ref{cor:pds-path}, there exists a separating set of size $r+1$ that complies with ICD-Sep condition \ref{ICD-node-constraints}. Condition \ref{ICD-size} is complied by definition. This ensures identifying all independence relations with a minimal separation set of size $r+1$ are identified and corresponding edges are removed (\algref{alg:AlgApp}-lines 12--16). Following \lemref{lem:ancestral-relations}, orientation using FCI-orientation rules ensures that the resulting graph is a PAG that $(r+1)$-represents the causal DAG $\cdag$.

From the definition of $\ICDSep$ it follows that for a pair of adjacent nodes $A$ and $B$, if $\ICDSep(A,B)$ given $r$ is empty, then $\ICDSep(A,B)$ given $r+1$ is empty. Thus, concluding the algorithm at iteration $r$ if $|\ICDSep(A,B)|$ is empty for any ordered pair of adjacent nodes $(A,B)$ ensures that all independence relations have been identified, which ensures completeness.
\end{proof}

\begin{algorithm}
\SetKwInput{KwInput}{Input}                
\SetKwInput{KwOutput}{Output}              
\SetKw{Break}{break}
\DontPrintSemicolon
  
  \BlankLine
  
  \KwInput{
    \\\quad $n$: desired $n$-representing PAG (default: $|\obs|-2$)
    \\\quad $\sindep$: a conditional independence oracle
    }
    
    \BlankLine
    
  \KwOutput{\\\quad $\pag$: a PAG for $n$-$\obs$-equivalence class (a completed PAG is returned for the default $n=|\obs|-2$)}

  \SetKwFunction{FMain}{Main}
  \SetKwFunction{FIter}{Iteration}
  \SetKwFunction{FPDSepRange}{PDSepRange}

\BlankLine\BlankLine\BlankLine\BlankLine

{initialize: $r \assign 0$, $\pag \assign$ a complete graph with `o' edge-marks, and $done \assign \mathrm{False}$}\;
\BlankLine
\While{$(r \leq n)$ \& $(done=\mathrm{False})$}{
    $(\pag, done) \leftarrow$ \FIter($\pag$, $r$) \Comment*{refine $\pag$ using conditioning sets of size $r$}
    $r \assign r+1$
    }
\KwRet $\pag$\;

\BlankLine\BlankLine\BlankLine\BlankLine

  \SetKwProg{Fn}{Function}{:}{\KwRet}
  \Fn{\FIter{$\pag$, $r$}}{
        $done \leftarrow \text{True}$\;
        \For{edge $(X, Y)$ ~in~ $\Edges(\pag)$}{
            $\{\mathbf{Z}_i\}_{i=1}^\ell \leftarrow$ \FPDSepRange($X$, $Y$, r, $\pag$)\Comment*{$\mathbf{Z}_i$ complies with  ICD-Sep conditions}

            \If{$\ell > 0$}{
                  $done \leftarrow \text{False}$\; 

                \For{$i \assign 1$ \KwTo $\ell$}{
                
                    \If{$\sindep(X, Y| \mathbf{Z}_i)$}{
                        remove edge $(X, Y)$ from $\pag$\;
                        record $\mathbf{Z}_i$ as a separating set for $(X, Y)$\;
                        \Break\;
                    }
                }
            }
        }
        orient edges in $\pag$\;

        \KwRet $(\pag, done)$\;
  }
  \caption{Iterative causal discovery (ICD algorithm)}
  \label{alg:AlgApp}
\end{algorithm}

\section{Additional Experimental Results for Structural Accuracy}
In this section we provide \tabref{tab:structural-accuracy}, the experimental results discussed in Section 4.2 of the paper.

We measure the accuracy of the skeleton by calculating false-positive ratio (FPR), false-negative ratio (FNR) and F1-score. Correctly identifying the presence of an edge is considered true-positive. We measure the accuracy of edge orientation by calculating the percentage of correctly oriented edge-marks. Finally, we also count the number of CI tests required by each algorithm and normalize it by the number required by FCI (per data set).

\section{Broader Impact}

The significant progress made in ML research over the past few years, has led to increasing deployment of algorithms in real world applications. While state of the art models often reach high quality results, they have been criticized for making black box decisions, not providing their users tools to explain how they reach their conclusions. Understanding how models arrive at their decisions is critical for the use of AI, as it builds users’ trust in ML based automatic systems, especially in decision critical applications. Such trust can be built by giving the user insights on how a system reaches its conclusions, which is especially important with high dimensional data having a large number of domain variables to consider. Causal structure discovery provides various capabilities beyond inference, such as counterfactual analysis, association, intervention, and imagining, and therefore, may also serve as an addition to the Explainable AI toolset, as it aims to improve the ability to identify the causal relationships among domain variables, thereby providing the decision makers with tools to understand those decisions, e.g. which variables are important and to what degree? which do not influence a specific result? which domain variables are the cause of a phenomena, and which merely correlate with it? By having such ability, human operators can supervise the recommendations of the method, intervene and point to cases that, in their view as experts, are potentially arguable, and therefore require an in-depth analysis before concluding with a final decision. Positive examples of such are abundant, especially from observational clinical data, and offer guidance to accurately discover known causal relationships in the medical domain. Fairness, inequality and bias issues, e.g. against minorities, oftentimes exist in data, and our approach, through the causal graph, provides the human supervisor with an inherent ability to inquire the ruling of the algorithm, thereby to consider potential ethical bridges that may reside in the causal graph, and consequently to overrule and correct them. With this innate transparency, we believe that our method is posited better to handle some of those ethical concerns, reinforcing the users’ trust in the method, and positively impacting their willingness to use and rely on it.

\begin{table}[h!]
    \begin{center}
      \caption{Structural accuracy of learned graphs and the required number of CI tests for FCI, FCI+, RFCI, and (proposed) ICD. The accuracy of graph skeleton is measured by false-positive ratio (FPR), false-negative ratio (FNR), and F1 score. Edge orientation accuracy is measured by the percentage of correctly oriented edges.
      \label{tab:structural-accuracy}}
      \small
      \begin{tabular}{c|c|c|c|c|c|c}
        \toprule 
        \textbf{Data Samples} & \textbf{Algorithm} & \textbf{\# CI Tests Ratio} & \textbf{FPR} & \textbf{FNR} & \textbf{F1 Score} & \textbf{Orientation Accuracy} \\
        
        \midrule 
        100	 & FCI          & 1.0	& \textbf{0.010}	        & 0.638	        & 0.52	         & 0.19 \\
        100	 & FCI+         & 1.3           & 0.012             & 0.628         & 0.52           & 0.20 \\
        100	 & RFCI	        & 0.9           & 0.012             & 0.628         & 0.52           & 0.20 \\
        100	 & ICD	& \textbf{0.5}          & 0.023     & \textbf{0.606} & \textbf{0.54} & \textbf{0.22} \\
        
        \midrule 
        200	 & FCI	        & 1.0   & \textbf{0.014}        & 0.589           & 0.56	         & 0.24 \\
        200	 & FCI+	        & 1.1           & 0.019         & 0.570           & 0.58         & 0.25 \\
        200	 & RFCI	        & 0.8           & 0.019         & 0.570           & 0.58         & 0.25 \\
        200	 & ICD	& \textbf{0.4}          & 0.039 & \textbf{0.515} & \textbf{0.61} & \textbf{0.28} \\
        
        \midrule 
        500	 & FCI	        & 1.0   & \textbf{0.011}        & 0.519          & 0.63          & 0.29 \\
        500	 & FCI+	        & 0.9           & 0.017         & 0.485          & 0.65	         & 0.30 \\
        500	 & RFCI	        & 0.7           & 0.016         & 0.485          & 0.65	         & 0.30 \\
        500	 & ICD	& \textbf{0.3}          & 0.062 & \textbf{0.389} & \textbf{0.69} & \textbf{0.34} \\
        
        \midrule 
        1000 & FCI	        & 1.0   & \textbf{0.008}        & 0.482          & 0.66	         & 0.32 \\
        1000 & FCI+	        & 0.6           & 0.018         & 0.428          & 0.70          & 0.35 \\
        1000 & RFCI	        & 0.5           & 0.017         & 0.429          & 0.70          & 0.35 \\
        1000 & ICD	& \textbf{0.2}          & 0.081 & \textbf{0.320} & \textbf{0.73} & \textbf{0.39} \\

        \midrule 
        3000 & FCI	        & 1.0   & \textbf{0.005}        & 0.447          & 0.69	         & 0.38 \\
        3000 & FCI+	        & 0.25           & 0.020         & 0.359          & 0.75          & 0.40 \\
        3000 & RFCI	        & 0.20           & 0.019         & 0.360          & 0.75          & 0.40 \\
        3000 & ICD	& \textbf{0.08}          & 0.111 & \textbf{0.209} & \textbf{0.77} & \textbf{0.46} \\

        \bottomrule 
      \end{tabular}
    \end{center}
  \end{table}

\end{document}